\newcommand{\R}{\mathbb{R}}
\newcommand{\BigO}[1]{\ensuremath{\mathcal{O}\left(#1\right)}}                  
\newcommand{\BigOm}[1]{\ensuremath{\Omega\left(#1\right)}}                      
\newcommand{\poly}{\mathrm{poly}}
\newcommand{\vect}[1]{\ensuremath{\mathbf{#1}}}                                 
\newcommand{\mat}[1]{\ensuremath{\mathbf{\MakeUppercase{#1}}}}                  
\newcommand{\KL}[2]{\ensuremath{\mathbb{KL} \left(#1 \middle\Vert #2 \right)}}   
\newcommand{\Exp}[2]{\ensuremath{\mathbb{E}_{#1}\left[#2\right]}}                
\newcommand{\Ind}[1]{\ensuremath{\mathbf{1}\left[#1\right]}}                     
\newcommand{\Norm}[1]{\ensuremath{\left\lVert #1 \right\rVert}}                  
\newcommand{\NormI}[1]{\ensuremath{\left\lVert #1 \right\rVert}_1}               
\newcommand{\NormII}[1]{\ensuremath{\left\lVert #1 \right\rVert}_2}              
\newcommand{\NormInfty}[1]{\ensuremath{\left\lVert #1 \right\rVert_{\infty}}}    
\newcommand{\Diag}{\mathrm{\mathbf{Diag}}}                                       
\newcommand{\InNorm}[1]{{\left\vert\kern-0.2ex\left\vert\kern-0.2ex\left\vert #1 
    \right\vert\kern-0.2ex\right\vert\kern-0.2ex\right\vert}}                    
\newcommand{\InNormII}[1]{{\left\vert\kern-0.2ex\left\vert\kern-0.2ex\left\vert #1 
    \right\vert\kern-0.2ex\right\vert\kern-0.2ex\right\vert}_2}                    
\newcommand{\InNormInfty}[1]{{\left\vert\kern-0.2ex\left\vert\kern-0.2ex\left\vert #1 
    \right\vert\kern-0.2ex\right\vert\kern-0.2ex\right\vert}_{\infty}}           
\newcommand{\Abs}[1]{\ensuremath{\left \lvert #1 \right \rvert}}                 
\newcommand{\Prob}[1]{\ensuremath{\mathrm{Pr}\left\{ #1 \right\}}}               
\newcommand{\iid}{i.i.d~}                                                        
\newcommand{\Grad}{\nabla}                                                       
\DeclarePairedDelimiterX{\Inner}[2]{\langle}{\rangle}{#1, #2}                    
\newcommand{\MI}{\mathnormal{I}}                                                     
\newcommand{\Land}{\wedge}                                                       
\newcommand{\defeq}{\overset{\mathrm{def}}{=}}                                   
\newcommand{\csum}[2]{\sum_{\mathclap{#1}}^{\mathclap{#2}}}                      
\DeclareMathOperator*{\intersection}{\cap}
\DeclareMathOperator*{\argmin}{argmin}
\DeclareMathOperator*{\argmax}{argmax}
\newtheorem{proposition}{Proposition}
\newtheorem{assumption}{Assumption}
\newtheorem{lemma}{Lemma}
\newtheorem{theorem}{Theorem}
\newtheorem{remark}{Remark}
\newcommand{\Set}[1]{\{#1\}}
\newcommand{\x}{\mathbf{x}}
\newcommand{\w}{\mathbf{w}}
\newcommand{\win}{\ensuremath{\w_{-i}}} 
\newcommand{\wtin}{\ensuremath{{\w^*_{-i}}}} 
\newcommand{\whin}{\ensuremath{\widehat{\w}_{-i}}} 
\newcommand{\vv}{\ensuremath{\mathbf{v}}} 
\newcommand{\vi}{\ensuremath{\mathbf{v}_i}} 
\newcommand{\vhi}{\ensuremath{\mathbf{\widehat{v}}_i}} 
\newcommand{\vt}{\ensuremath{\mathbf{v^*}}} 
\newcommand{\vti}{\ensuremath{{\mathbf{v}^*_i}}} 
\newcommand{\vh}{\ensuremath{\widehat{\mathbf{v}}}} 
\newcommand{\zv}{\ensuremath{\vect{z}}} 
\newcommand{\zl}{\ensuremath{{\vect{z}^{(l)}}}} 
\newcommand{\zi}{\ensuremath{\vect{z}_i}} 
\newcommand{\zil}{\ensuremath{\vect{z}_i^{(l)}}} 
\newcommand{\zsl}{\ensuremath{{\vect{z}^{(l)}_S}}} 
\newcommand{\zs}{\ensuremath{\vect{z}_S}} 
\newcommand{\xin}{\ensuremath{\x_{-i}}} 
\newcommand{\NE}{\ensuremath{\mathcal{NE}}}
\newcommand{\Data}{\ensuremath{\mathcal{D}}}
\renewcommand{\Game}{\ensuremath{\mathcal{G}}}
\newcommand{\Sc}{\ensuremath{S^c}} 
\newcommand{\vs}{\ensuremath{\vect{v}_{S}}} 
\newcommand{\vhs}{\ensuremath{\vh_{S}}} 
\newcommand{\vts}{\ensuremath{{\vect{v}^*_{S}}}} 
\newcommand{\loss}{\ensuremath{\ell}} 
\newcommand{\Hess}{\ensuremath{\nabla^2}} 
\newcommand{\ds}{\Delta_S} 
\newcommand{\dhs}{\widehat{\Delta}_S} 
\newcommand{\dbs}{\Delta_{\overline{S}}} 
\newcommand{\dbsc}{\Delta_{\overline{\Sc}}} 
\newcommand{\yv}{\vect{y}} 
\newcommand{\mumax}{\mu_{\mathrm{max}}}
\newcommand{\mumin}{\mu_{\mathrm{min}}}
\newcommand{\eigmin}{\lambda_{\mathrm{min}}}
\newcommand{\eigmax}{\lambda_{\mathrm{max}}}
\newcommand{\cmin}{C_{\mathrm{min}}}
\newcommand{\dmax}{D_{\mathrm{max}}}
\newcommand{\X}{\mathcal{X}} 
\newcommand{\U}{\mathcal{U}} 
\newcommand{\rhomin}{\rho_{\mathrm{min}}}
\newcommand{\vtbs}{\vect{v}^*_{\overline{S}}}
\newcommand{\Pf}{\mathcal{P}}             
\newcommand{\pmax}{p_{\mathrm{max}}}      
\newcommand{\pmin}{p_{\mathrm{min}}}      
\newcommand{\tpmax}{\tilde{p}_{\mathrm{max}}}      
\newcommand{\tpmin}{\tilde{p}_{\mathrm{min}}}      
\newcommand{\bZ}{\mat{Z}}
\newcommand{\bI}{\mat{I}}
\newcommand{\bP}{\mat{P}}
\newcommand{\by}{\vect{y}}
\newcommand{\cS}{\mathcal{S}}
\newcommand{\fG}{\mathfrak{G}}   
\newcommand{\ftG}{\widetilde{\mathfrak{G}}}   
\newcommand{\perr}{p_{\mathrm{err}}}   
\newcommand{\fnet}{f_{\NE^*}}
\begin{document}

\title{Learning Graphical Games from Behavioral Data: Sufficient and Necessary Conditions}

\author{Asish Ghoshal and Jean Honorio\\
Department of Computer Science\\
Purdue University\\
West Lafayette, IN - 47906\\
\{aghoshal, jhonorio\}@purdue.edu}

\date{}

\maketitle

\begin{abstract}
In this paper we obtain sufficient and necessary conditions
on the number of samples required for exact recovery of the pure-strategy
Nash equilibria (PSNE) set of a graphical game 
from noisy observations of joint actions.
We consider sparse linear influence games ---
a parametric class of graphical games with linear payoffs, and
represented by directed graphs of $n$ nodes (players) and in-degree of at most $k$.
We show that one can efficiently recover the PSNE set of a linear influence game 
with $\BigO{k^2 \log n}$ samples, under very general observation models.
 On the other hand, we show that $\BigOm{k \log n}$
samples are necessary for any procedure to recover the PSNE set from observations
of joint actions. 
\end{abstract}
\section{Introduction and Related Work}
\label{sec:introduction}
Non-cooperative game theory is widely considered as an appropriate
mathematical framework for studying \emph{strategic} behavior in multi-agent scenarios.
In Non-cooperative game theory, the core solution concept of \emph{Nash equilibrium} 
describes the stable outcome of the overall behavior of self-interested agents  
--- for instance people, companies, governments, groups or autonomous systems ---
interacting strategically with each other and in distributed settings.

Over the past few years, considerable progress has been made in analyzing 
behavioral data using game-theoretic tools, e.g. 
computing Nash equilibria \cite{blum2006continuation,ortiz2002nash,vickrey2002multi},
most influential agents \cite{irfan14}, price of anarchy \cite{ben2011local} 
and related concepts in the context of graphical games.
In \emph{political science} for instance, Irfan and Ortiz \cite{irfan14} 
identified, from congressional voting records, the most influential senators in the U.S. congress --- 
a small set of senators whose collective behavior forces every other senator to a unique choice of vote.
Irfan and Ortiz \cite{irfan14} also observed that 
the most influential senators were strikingly similar to the gang-of-six senators, 
formed during the national debt ceiling negotiations of 2011.
Further, using graphical games,  Honorio and Ortiz \cite{honorio15} 
showed that Obama's influence on Republicans increased in the last sessions before candidacy, 
while McCain's influence on Republicans decreased.

The problems in \emph{algorithmic game theory} described above, i.e., 
computing the Nash equilibria, computing the price of anarchy or finding the most influential agents,
require a known graphical game which is not available apriori in real-world settings.
Therefore, Honorio and Ortiz \cite{honorio15} proposed learning graphical games from behavioral data,
using maximum likelihood estimation (MLE) and \emph{sparsity}-promoting methods.
On the other hand, Garg and Jaakkola \cite{Garg16} provide a discriminative 
approach to learn a class of graphical games called potential games.
Honorio and Ortiz \cite{honorio15} and Irfan and Ortiz 
\cite{irfan14} have also demonstrated the usefulness of learning \emph{sparse} graphical games from behavioral data
in real-world settings, through their analysis of
the voting records of the U.S. congress as well as the U.S. supreme court.

In this paper, we obtain necessary and sufficient conditions for recovering the PSNE set
of a graphical game in polynomial time. We also generalize the observation model from 
Ghoshal and Honorio \cite{ghoshal2016behavior}, to arbitrary distributions that satisfy certain
mild conditions. Our polynomial time method for
recovering the PSNE set, which was proposed by Honorio and Ortiz \cite{honorio15},
is based on using logistic regression for learning the neighborhood of each player in the graphical game, 
independently. Honorio and Ortiz \cite{honorio15}
showed that the method of independent logistic regression is likelihood consistent; i.e., in the
infinite sample limit, the likelihood estimate converges to the best achievable likelihood. In
this paper we obtain the stronger guarantee  of recovering the true PSNE set exactly. 

Finally, we would like to draw the attention of the reader to the fact that $\ell_1$-regularized logistic
regression has been analyzed by Ravikumar et. al. \cite{Ravikumar2010} in the context of learning
sparse Ising models. Apart from technical differences and differences in proof techniques,
our analysis of $\ell_1$-penalized logistic regression
for learning sparse graphical games differs from Ravikumar et. al. \cite{Ravikumar2010} conceptually --- in the sense
that we are not interested in recovering the edges of the true game graph, but only the PSNE set. Therefore,
we are able to avoid some stronger conditions required by Ravikumar et. al. \cite{Ravikumar2010}, 
such as mutual incoherence.


\section{Preliminaries}
\label{sec:prelim}
In this section we provide some background information on graphical games introduced 
by Kearns et. al. \cite{kearns01}.
\subsection{Graphical Games}
A \emph{normal-form game} $\Game$ in classical game theory is defined by the triple
$\Game = (V, \X, \U)$ of players, actions and payoffs. $V$ is the set of players, 
and is given by the set $V = \{1,\ldots,n\}$, if there are $n$ players. $\X$
is the set of actions or \emph{pure-strategies} and is given by the Cartesian product
$\X \defeq \times_{i \in V} \X_i$, where $\X_i$ is the set of pure-strategies of the $i$-th player.
Finally, $\U \defeq \{u_i\}_{i=1}^n$, is the set of payoffs,
where $u_i: \X_i \times_{j \in V\setminus i} \X_j \rightarrow \R$ specifies the payoff for the $i$-th player given
its action and the joint actions of the all the remaining players. 

\sloppy
An important solution concept in the theory of non-cooperative games is that of \emph{Nash equilibrium}. 
For a non-cooperative game, a joint action $\x^* \in \X$ is a pure-strategy Nash equilibrium (PSNE)
if, for each player $i$, $x_i^* \in \argmax_{x_i \in \X_i} u_i(x_i, \x^*_{-i})$, where 
$\x^*_{-i} = \{x^*_j \vert j \neq i\}$. In other words,
$\x^*$ constitutes the mutual best-response for all players and no player has any incentive to 
unilaterally deviate from their optimal action $x^*_i$ given the joint actions of the remaining players $\x^*_{-i}$. 
The set of all \emph{pure-strategy Nash equilibrium} (PSNE) for a game $\Game$ is defined as follows:
\begin{align}
\NE(\Game) = \left\{\x^* \big| (\forall i \in V)\; x^*_i \in 
	\argmax_{x_i \in \X_i} u_i(x_i, \x^*_{-i}) \right\}. \label{eq_psne_set}
\end{align}
\fussy

Graphical games, introduced by Kearns et al. \cite{kearns01}, are game-theoretic analogues
of graphical models. A graphical game $G$ is defined by the \emph{directed graph}, $G = (V, E)$, of vertices and 
directed edges (arcs), where vertices correspond to players and arcs
encode ``influence'' among players, i.e., the payoff of  
the $i$-th player only depends on the actions of its (incoming) neighbors.

\subsection{Linear Influence Games}
Irfan and Ortiz \cite{irfan14} and Honorio and Ortiz \cite{honorio15}, introduced a specific form
of graphical games, called \emph{Linear Influential Games}, characterized by 
binary actions, or pure strategies, and linear payoff functions.
We assume, without loss of generality, that the joint action space $\X = \{-1, +1\}^n$.
A linear influence game between $n$ players, $\Game(n) = (\mat{W}, \vect{b})$, is characterized by
(i) a matrix of weights $\mat{W} \in \R^{n \times n}$, where the entry $w_{ij}$ 
indicates the amount of influence (signed) that the $j$-th player has on the $i$-th player and 
(ii) a bias vector $\vect{b} \in \R^n$, where $b_i$ captures
the prior preference of the $i$-th player for a particular action $x_i \in \{-1, +1\}$. 
The payoff of the $i$-th player is a linear function of the actions of the remaining players:
$u_i(x_i, \xin) = x_i(\win^T\xin - b_i) $, and the PSNE set is defined as follows:
\begin{align}
\NE(\Game(n)) = \left\{\x | (\forall i)\; x_i(\win^T\xin - b_i) \geq 0 \right\} \label{eq:lig_psne},
\end{align}
where $\win$ denotes the $i$-th row of $\mat{W}$ without the $i$-th entry, i.e.
$\win = \{w_{ij} \vert j \neq i \}$. Note that we have $\mathrm{diag}(\mat{W}) = 0$.
Thus, for linear influence games, the weight matrix $\mat{W}$ and 
the bias vector $\vect{b}$, completely specify the game and the PSNE set
induced by the game. Finally, let $\Game(n, k)$ denote a sparse game
over $n$ players where the in-degree of any vertex is at most $k$.
\section{Problem Formulation}
\label{sec:problem_formulation}
Now we turn our attention to the problem of learning graphical games from
observations of joint actions only. Let $\NE^* \defeq \NE(\Game^*(n, k))$. We assume that
there exists a game $\Game^*(n, k) = (\mat{W}^*, \vect{b}^*)$ from which a ``noisy'' 
data set $\Data = \{\x^{(l)}\}_{l=1}^m$ of $m$ observations is generated,
where each observation $\x^{(l)}$ is sampled independently and
identically from some distribution $\Pf$. 
We will use two specific distributions $\Pf_g$ and $\Pf_l$, which we 
refer to as the \emph{global} and \emph{local} noise model, to provide
further intuition behind our results. In the global noise model,
we assume that a joint action is observed from the PSNE set with probability
$q_g \in (\nicefrac{\Abs{\NE^*}}{2^n}, 1)$, i.e.
\begin{align}
\Pf_g(\x) = \frac{q_g \Ind{\x \in \NE^*}}{|\NE^*|} 
	+ \frac{(1 - q_g)\Ind{\x \notin \NE^*}}{2^n - |\NE^*|} \label{eq:obs_model_global}.
\end{align}
In the above distribution, $q_g$ can be thought of as the ``signal'' level in the data set,
while $1 - q_g$ can be thought of as the ``noise'' level in the data set. 
In the local noise model we assume that the joint actions are drawn from the PSNE set
with the action of each player corrupted independently by some Bernoulli noise.
Then in the local noise model the distribution over
joint actions is given as follows:
\begin{align}
\Pf_l(\x) = \frac{1}{\Abs{\NE^*}} \sum_{\by \in \NE^*}
	 \prod_{i=1}^n q_i^{\Ind{x_i = y_i}} (1 - q_i)^{\Ind{x_i \neq y_i}},
\label{eq:obs_model_local}
\end{align}
where $q_i > 0.5$. While these noise models were introduced in \cite{honorio15}, we obtain
our results with respect to very general observation models, satisfying only some mild conditions.
A natural question to ask then is that: ``Given only the data set $\Data$ and no other
information, is it possible to recover the game graph?''
Honorio and Ortiz \cite{honorio15} showed that 
it is in general impossible to learn the true game $\Game^*(n, k)$ from observations of joint actions only
because multiple weight matrices $\mat{W}$ and bias vectors $\vect{b}$ can induce the same 
PSNE set and therefore have the same likelihood under the global noise model \eqref{eq:obs_model_global} ---
an issue known as non-identifiablity in the statistics literature. It is also easy to see that
the same holds true for the local noise model. It is, however, possible to learn the equivalence class of games
that induce the same PSNE set. We define the equivalence of two games $\Game^*(n, k)$
and $\widehat{\Game}(n, k)$ simply as :
\begin{gather*}
\Game^*(n, k) \equiv \widehat{\Game}(n, k) \text{ iff } \NE(\Game^*(n, k)) = \NE(\widehat{\Game}(n, k)).
\end{gather*}
Therefore, our goal in this paper is efficient 
and consistent recovery of the pure-strategy Nash equilibria set (PSNE) from observations
of joint actions only; i.e., given a data set $\Data$, drawn from some game $\Game^*(n, k)$
according to the distribution $\Pf$, we infer a game $\widehat{\Game}(n, k)$ from $\Data$
such that $\widehat{\Game}(n, k) \equiv \Game^*(n, k)$.
\section{Method and Results}
\label{sec:results}
In order to efficiently learn games, we make a few assumptions on the probability distribution
from which samples are drawn and also on the underlying game.
\subsection{Assumptions}
The following assumption ensures that the distribution $\Pf$ assigns non-zero
mass to all joint actions in $\X$ and that the signal level in the data set
is more than the noise level.
\begin{assumption}
\label{ass:distribution}
There exists constants $\tpmin, \tpmax$ and $\pmax$
such that the data distribution $\Pf$ satisfies the following:
\begin{gather*}
0 < \frac{\tpmin}{2^n - \Abs{\NE^*}} \leq \Pf(\x) \leq \frac{\tpmax}{2^n - \Abs{\NE^*}}, \forall \x \in \X \setminus \NE^*, \\
\frac{\tpmax}{2^n - \Abs{\NE^*}} < \Pf(\x) \leq \pmax \leq 1,\, \forall \x \in \NE^*.
\end{gather*}
\end{assumption}
To get some intuition for the above assumption, consider the global noise model. In
this case we have that $\tpmin = \tpmax = (1 - q_g)$, $\pmax = \nicefrac{q_g}{\Abs{\NE^*}}$, and
$\forall \x \in \NE^*,\, \Pf(\x) = \pmax$. For the local noise model, consider, for simplicity,
the case when there are only two joint actions in the PSNE set:
 $\NE^* = \Set{\x^1, \x^2}$, such that $x^1_1 = +1, x^2_1 = -1$ and $x^1_i = x^2_i = +1$
for all $i \neq 1$. Then, $\tpmin = 0.5 \times (1 - q_2) \times \ldots \times (1 - q_n) \times (2^n - 2)$,
$\tpmax = 0.5 \times (1 - q_j) (\prod_{i\notin \Set{j,1}} q_i) \times (2^n - 2)$, where $q_j = \min\Set{q_2, \ldots, q_n}$,
and $\pmax = 0.5 \times q_2 \times \ldots q_n$.

Our next assumption concerns with the minimum payoff in the PSNE set.
\begin{assumption}
\label{ass:payoff}
The minimum payoff in the PSNE set, $\rhomin$, is strictly positive, specifically:
\begin{align*}
x_i(\wtin^T \xin - b_i) \geq \rhomin > \nicefrac{5 \cmin}{\dmax}
                && (\forall\; \vect{x} \in \NE^*),
\end{align*}
where $\cmin > 0$ and $\dmax$ are the minimum and maximum eigenvalue 
of the expected Hessian and scatter matrices respectively.
\end{assumption}
Note that as long as the minimum payoff is strictly positive, we
can scale the parameters $(\mat{W}^*, \vect{b}^*)$ by the constant
$\nicefrac{5 \cmin}{\dmax}$ to satisfy the condition: $\rhomin > \nicefrac{5 \cmin}{\dmax}$,
without changing the PSNE set. Indeed the assumption that the minimum payoff is strictly positive is
is unavoidable for exact recovery of the PSNE set in a 
noisy setting such as ours, because otherwise this is akin to exactly recovering
the parameters $\vv$ for each player $i$. For example, if $\x \in \NE^*$ is such that
$\vt^T \x = 0$, then it can be shown that even if $\NormInfty{\vt - \vh} = \varepsilon$,
for any $\varepsilon$ arbitrarily close to $0$, then $\vh^T \x < 0$ and therefore 
$\NE(\mat{W}^*, \vect{b}^*) \neq \NE(\widehat{\mat{W}}, \widehat{\vect{b}})$.

\subsection{Method}
Our main method for learning the structure of a sparse LIG, $\Game^*(n,k)$,
is based on using $\ell_1$-regularized logistic regression, to learn
the parameters $(\win, b_i)$ for each player $i$ independently.
We denote by $\vi(\mat{W}, \vect{b}) = (\win, -b_i)$ the parameter vector 
for the $i$-th player, which characterizes its
payoff; by $\zi(\x) = (x_i \xin,~ x_i)$ the ``feature'' vector. In the rest of the paper we
use $\vi$ and $\zi$ instead of $\vi(\mat{W}, \vect{b})$ and $\zi(\x)$ respectively, to simplify notation.
Then, we learn the parameters for the $i$-th player as follows:
\begin{align}
	\vhi &= \argmin_{\vi} \loss(\vi, \Data) + \lambda \NormI{\vi} \label{eq:optimization} \\
	\loss(\vi, \Data) &= \frac{1}{m} \sum_{l=1}^m \log(1 + \exp(-\vi^T\zil)) \label{eq:loss}.
\end{align}
We then set $\whin = [\vhi]_{1:(n - 1)}$ and $\widehat{b}_i = -[\vhi]_n$, where the notation
$[.]_{i:j}$ denotes indices $i$ to $j$ of the vector.  
We take a moment to introduce the expressions of the gradient and the Hessian of the 
loss function \eqref{eq:loss}, which will be useful later. The gradient and Hessian of the loss
function for any vector $\vv$ and the data set $\Data$ is given as follows:
\begin{align}
\Grad \loss(\vect{v}, \Data) = \frac{1}{m} 
	\sum_{l=1}^m\left\{ \frac{- \zl}{1 + \exp(\vect{v}^T \zl)} \right\} \label{eq:grad} \\
\Hess \loss(\vect{v}, \Data) = \frac{1}{m}  	
	\sum _{l=1}^m \eta(\vv^T \zl) \zl \zl^T,
\end{align}
where $\eta(x) = \nicefrac{1}{(e^{x/2} + e^{-x/2})^2}$. Finally, $\mat{H}^m_i$ denotes
the sample Hessian matrix with respect to the $i$-th player and 
the true parameter $\vti$, and  $\mat{H}_i^*$ denotes its expected value,
i.e. $\mat{H}_i^* \defeq \Exp{\Data}{\mat{H}^m_i} = 
\Exp{\Data}{\Hess \loss(\vti, \Data)}$.
In subsequent sections we drop the notational dependence of $\mat{H}_i^*$ and $\zi$ on $i$ to simplify notation.

We show that, under the aforementioned assumptions on the true game $\Game^*(n,k) = (\mat{W}^*, \vect{b}^*)$,
the parameters $\widehat{\mat{W}}$ and $\widehat{\vect{b}}$ obtained using
\eqref{eq:loss} induce the same PSNE set as the true game, i.e., $\NE(\mat{W}^*, \vect{b}^*) = 
\NE(\widehat{\mat{W}}, \widehat{\vect{b}})$.

\subsection{Sufficient Conditions}
In this section, we derive sufficient conditions on the number of samples for efficiently
recovering the PSNE set of graphical games with linear payoffs.
To start with, we make the following observation regarding the number of Nash equilibria of the
game satisfying Assumption \ref{ass:payoff}. The proof of the following proposition,
as well as other missing proofs can be found in Appendix \ref{app:detailed_proofs}.
\begin{proposition}
\label{prop:number_of_equilibria}
The number of Nash equilibria of a non-trivial game ($\Abs{\NE^*} \in [1, 2^n - 1]$) 
satisfying Assumption \ref{ass:payoff}
is at most $2^{n-1}$.
\end{proposition}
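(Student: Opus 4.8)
The plan is to exploit the \emph{strict} positivity of payoffs on $\NE^*$ guaranteed by Assumption~\ref{ass:payoff} via a coordinate-flipping pairing argument on the hypercube $\X = \{-1,+1\}^n$. The driving observation is that flipping a single coordinate of a joint action negates exactly the corresponding player's payoff, while leaving the contribution of the other players to that payoff unchanged.

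First I would fix a coordinate, say $i = 1$, and define the involution $\sigma : \X \to \X$ that flips the first entry, $\sigma(\x) = (-x_1, x_2, \ldots, x_n)$. Since $\sigma$ has no fixed point and $\sigma \circ \sigma = \mathrm{id}$, it partitions $\X$ into exactly $2^{n-1}$ disjoint two-element orbits $\{\x, \sigma(\x)\}$. Next I would argue that each orbit contains at most one element of $\NE^*$. Indeed, player $1$'s payoff is $u_1(\x) = x_1\bigl((\vect{w}^*_{-1})^T \x_{-1} - b^*_1\bigr)$; applying $\sigma$ leaves $\x_{-1} = (x_2,\ldots,x_n)$ untouched and only flips the prefactor $x_1$, so $u_1(\sigma(\x)) = -u_1(\x)$. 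If $\x \in \NE^*$, then Assumption~\ref{ass:payoff} gives $u_1(\x) \geq \rhomin > 0$, hence $u_1(\sigma(\x)) = -u_1(\x) < 0$. Since membership in the PSNE set requires a nonnegative payoff for \emph{every} player (cf.\ \eqref{eq:lig_psne}), this forces $\sigma(\x) \notin \NE^*$. Summing the bound ``at most one equilibrium per orbit'' over the $2^{n-1}$ orbits yields $\Abs{\NE^*} \leq 2^{n-1}$.

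I do not expect a genuine obstacle here: the whole argument is a fixed-point-free involution plus pigeonhole. The one subtlety I would flag explicitly is that the conclusion hinges on the \emph{strictness} $\rhomin > 0$, which upgrades $u_1(\x) \geq 0$ to $u_1(\x) > 0$ and thereby makes $u_1(\sigma(\x)) < 0$ strict; were the equilibrium condition only the non-strict $u_1 \geq 0$, both $\x$ and $\sigma(\x)$ could satisfy $u_1 = 0$ simultaneously and the pairing would collapse (in the extreme, the entire cube $\X$ would qualify, giving $\Abs{\NE^*} = 2^n$). The non-triviality hypothesis $\Abs{\NE^*} \in [1, 2^n - 1]$ is not actually needed for the upper bound itself and serves only to exclude degenerate games.
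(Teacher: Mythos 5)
Your proof is correct, and it takes a genuinely different route from the paper's. The paper argues by contradiction using the antipodal map $\x \mapsto -\x$: if $\Abs{\NE^*} > 2^{n-1}$, the pigeonhole principle yields an antipodal pair $\x, -\x \in \NE^*$; the paper then claims that strict payoffs force every bias $b_i$ to vanish, that zero biases make $\NE^*$ closed under global negation, and hence that $\Abs{\NE^*} = 2^n$, contradicting non-triviality. You instead pair joint actions by flipping a \emph{single} coordinate and show directly that each of the $2^{n-1}$ two-element orbits contains at most one equilibrium, since $u_1(\sigma(\x)) = -u_1(\x)$ while Assumption~\ref{ass:payoff} forces $u_1 > 0$ on $\NE^*$. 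Your version buys three things: it is a direct count rather than a contradiction; it uses strict positivity of only one player's payoff; and, as you correctly observe, it dispenses with the non-triviality hypothesis, which the paper needs precisely to rule out the endpoint $\Abs{\NE^*} = 2^n$. It is also more robust than the paper's own write-up: the intermediate inference that an antipodal pair of strict equilibria forces $b_i = 0$ does not follow as stated, since the two conditions $x_i({\wtin}^T\xin - b_i) > 0$ and $x_i({\wtin}^T\xin + b_i) > 0$ only yield $x_i {\wtin}^T\xin > \Abs{b_i}$ --- for instance, with $n = 2$, $w_{12} = w_{21} = 1$ and $b_1 = b_2 = \nicefrac{1}{2}$, both $(1,1)$ and $(-1,-1)$ are equilibria with strictly positive payoffs yet nonzero biases (here $\Abs{\NE^*} = 2 = 2^{n-1}$, so the proposition itself is untouched, but the paper's deduction is not). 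Your single-flip pairing sidesteps that step entirely and gives, in effect, a strengthening: under Assumption~\ref{ass:payoff} alone, $\Abs{\NE^*} \leq 2^{n-1}$ holds for every game, trivial or not.
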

We will denote the fraction of joint actions that are in the PSNE set by $\fnet \defeq \nicefrac{\Abs{\NE^*}}{2^{n-1}}$.
By proposition \ref{prop:number_of_equilibria}, $\fnet \in (0, 1]$.
Then, our main strategy for obtaining sufficient conditions for exact PSNE recovery guarantees is to first show
that under any data distribution $\Pf$ that satisfies Assumption \ref{ass:distribution},
the expected loss is smooth and strongly convex, i.e., the population Hessian matrix is
positive definite and the population scatter matrix has eigenvalues bounded by a constant.
Then using tools from random matrix theory, we
show that the sample Hessian and scatter matrices are ``well behaved'', i.e., are positive definite
and have bounded eigenvalues respectively, with high probability.
Then, we exploit the convexity properties of the logistic loss function to show that 
the weight vectors learned using penalized logistic regression are ``close'' to the true weight vectors. 
By our assumption that the minimum payoff in the PSNE set is strictly greater than zero, 
we show that the weight vectors inferred from a finite sample of joint actions
induce the same PSNE set as the true weight vectors.

The following lemma shows that the expected Hessian matrices for each player is positive definite
and the maximum eigenvalues of the expected scatter matrices are bounded from above by a constant.
\begin{lemma}
Let $S$ be the support of the vector $\vv$, i.e., $S \defeq \{i \vert \Abs{v_i} > 0 \}$.
There exists constant $\cmin \geq \frac{\eta(\NormI{\vt}) 2^n \tpmin}{2^n - \Abs{\NE^*}} > 0 $
and $\dmax \leq 2^n \pmax$, such that we have
$\eigmin(\mat{H}_{SS}^*) = \cmin$ and $\eigmax(\Exp{\x}{\zv_S \zv_S^T}) = \dmax$.
\end{lemma}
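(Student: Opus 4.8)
The plan is to reduce both spectral bounds to a single combinatorial identity: summed over the entire action space $\X = \{-1,+1\}^n$, the feature outer products form a scaled identity, $\sum_{\x \in \X} \zv \zv^T = 2^n \bI$. This holds because every coordinate of $\zv = (x_i \xin,\, x_i)$ lies in $\{-1,+1\}$, so each diagonal entry $\zv_j^2 = 1$ sums to $2^n$, while every off-diagonal entry $\zv_j \zv_k$ with $j \neq k$ equals $\pm$ a non-constant monomial in the coordinates of $\x$ (for instance $x_i^2 x_{(j)} x_{(k)} = x_{(j)} x_{(k)}$ between two player coordinates, or $x_i^2 x_{(j)} = x_{(j)}$ against the bias coordinate $\zv_n = x_i$) and therefore sums to zero over the hypercube. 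Restricting to the support $S$ extracts a principal submatrix, giving $\sum_{\x \in \X} \zv_S \zv_S^T = 2^n \bI$.

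For the scatter matrix I would sandwich $\Exp{\x}{\zv_S \zv_S^T} = \sum_{\x} \Pf(\x)\, \zv_S \zv_S^T$ using Assumption \ref{ass:distribution}. That assumption gives the uniform two-sided bound $\frac{\tpmin}{2^n - \Abs{\NE^*}} \leq \Pf(\x) \leq \pmax$ for every $\x \in \X$: the lower bound extends to $\NE^*$ since there $\Pf(\x) > \frac{\tpmax}{2^n - \Abs{\NE^*}} \geq \frac{\tpmin}{2^n - \Abs{\NE^*}}$, and the upper bound extends off $\NE^*$ since there $\Pf(\x) \leq \frac{\tpmax}{2^n - \Abs{\NE^*}} < \pmax$. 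Because each $\zv_S \zv_S^T$ is positive semidefinite, replacing $\Pf(\x)$ by its lower (resp.\ upper) bound is monotone in the Loewner order, and combining with the identity yields $\frac{2^n \tpmin}{2^n - \Abs{\NE^*}} \bI \preceq \Exp{\x}{\zv_S \zv_S^T} \preceq 2^n \pmax\, \bI$. The right inequality immediately gives $\dmax = \eigmax(\Exp{\x}{\zv_S \zv_S^T}) \leq 2^n \pmax$.

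For the Hessian, recall $\mat{H}_{SS}^* = \Exp{\x}{\eta(\vt^T \zv)\, \zv_S \zv_S^T}$. Since every coordinate of $\zv$ is $\pm 1$ we have $\Abs{\vt^T \zv} \leq \NormI{\vt}\, \NormInfty{\zv} = \NormI{\vt}$ for all $\x$, and because $\eta$ is even and strictly decreasing on $[0,\infty)$ this forces $\eta(\vt^T \zv) \geq \eta(\NormI{\vt}) > 0$ pointwise. Factoring out this uniform weight and applying the scatter-matrix lower bound from the previous step gives $\mat{H}_{SS}^* \succeq \eta(\NormI{\vt})\, \Exp{\x}{\zv_S \zv_S^T} \succeq \frac{\eta(\NormI{\vt})\, 2^n \tpmin}{2^n - \Abs{\NE^*}}\, \bI$, which is precisely the claimed lower bound on $\cmin = \eigmin(\mat{H}_{SS}^*)$; strict positivity follows since $\eta(\NormI{\vt}) > 0$ and $\tpmin > 0$.

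The only delicate points are verifying the hypercube identity carefully --- in particular confirming that the bias coordinate $\zv_n = x_i$ contributes the correct diagonal term and no spurious cross terms --- and checking that the mass bounds of Assumption \ref{ass:distribution} indeed hold on all of $\X$ rather than only on $\X \setminus \NE^*$ and $\NE^*$ separately. Once those are settled, both conclusions are immediate consequences of positive-semidefinite monotonicity, so I do not anticipate a substantive analytic obstacle beyond this bookkeeping.
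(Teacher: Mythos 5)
Your proof is correct and takes essentially the same route as the paper: your hypercube identity $\sum_{\x \in \X} \zv_S \zv_S^T = 2^n \bI$ is exactly the paper's orthogonality fact $\bZ^T \bZ = 2^n \bI_{\Abs{S}}$, and your Loewner sandwich using the two-sided bounds on $\Pf(\x)$ is an equivalent rephrasing of the paper's Rayleigh-quotient argument on $\bZ^T \bP \bZ$ via $\eigmin(\bP)$ and $\eigmax(\bP)$, with the identical pointwise step $\eta(\vt^T \zv) \geq \eta(\NormI{\vt})$ for the Hessian. If anything, you are a bit more careful than the paper, which writes the Hessian reduction as an equality rather than an inequality and leaves implicit the check that the mass bounds of Assumption \ref{ass:distribution} extend to all of $\X$.
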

\begin{proof}
\begin{align*}
\eigmin(\mat{H}_{SS}^*) &= \eigmin\Bigl(\Exp{\x}{\eta(\vt^T \zv) \zv_S \zv_S^T}\Bigr) \\
	&= \eta(\NormI{\vt}) \eigmin(\Exp{\x}{\zv_S \zv_S^T}).
\end{align*}
Let $\bZ \defeq \Set{\zv_S | \x \in \X}$ and
 $\bP \defeq \Diag((\Pf(\x))_{\x \in \X})$,
 where $\zv_S$ denotes the feature vector for the $i$-th player constrained to the support
set $S$ for some $i$.
Note that $\bZ  \in \Set{-1, 1}^{2^n \times \Abs{S}}$; $\bP \in \R^{2^n \times 2^n}$
and is positive definite by our assumption that  the minimum probability $\frac{\tpmin}{2^n - \Abs{\NE^*}} > 0$. 
Further note that the columns of $\bZ$ are orthogonal and $\bZ^T \bZ = 2^n \bI_{\Abs{S}}$,
where $\bI_{\Abs{S}}$ is the $\Abs{S} \times \Abs{S}$ identity matrix.
Then we have that
\begin{align*}
&\eigmin(\Exp{\x}{\zv_S \zv_S^T}) = \min_{\Set{\by \in \R^{\Abs{S}} | \NormII{\by} = 1}} \by^T \bZ^T \bP \bZ \by \\
&\quad= \min_{\Set{\by' \in \R^{2^n} | \by' = \nicefrac{\bZ \by}{\sqrt{2^n}} \Land \by \in \R^{\Abs{S}} \Land \NormII{\by}=1}}
	 2^n (\by')^T \bP \by' \\
&\quad \geq \min_{\Set{\by' \in \R^{2^n} | \NormII{\by'} = 1}} 2^n (\by')^T \bP \by'  \\
&=\quad 2^n \eigmin(\bP) = \frac{2^n \tpmin}{2^n - \Abs{\NE^*}}
\end{align*}
Therefore, the minimum eigenvalue of $\mat{H}_{SS}^*$ is lower bounded as follows:
\begin{align*}
\eigmin(\mat{H}_{SS}^*) = \cmin \geq \frac{\eta(\NormI{\vt}) 2^n \pmin}{2^n - \Abs{\NE^*}} > 0.
\end{align*}
Similarly, the maximum eigenvalue of $\Exp{\x}{\zv_S \zv_S^T}$ can be bounded as
$\eigmax(\Exp{\x}{\zv_S \zv_S^T}) = \eigmax(\bZ^T \bP \bZ) \leq  2^n \pmax$.
\end{proof}
\subsubsection{Minimum and Maximum Eigenvalues of Finite Sample Hessian and Scatter Matrices}
The following technical lemma shows that the eigenvalues conditions of the expected Hessian
and scatter matrices, hold with high probability in the finite sample case.
\begin{lemma}
\label{lemma:eigvalue}
If $\eigmin(\mat{H}^*_{SS}) \geq \cmin,\, \eigmax(\Exp{\x}{\zv_S \zv_S^T}) \leq \dmax$,
then we have that 
\begin{gather*}
\eigmin(\mat{H}^m_{SS}) \geq \frac{\cmin}{2},\,
\eigmax\left(\sum_{l=1}^m \zsl \zsl^T \right) \leq 2 \dmax
\end{gather*}
with probability at least 
\begin{gather*}
1 - \Abs{S} \exp{\left(\frac{-m \cmin}{2 \Abs{S}}\right)} \text{ and } 
1 - \Abs{S} \exp\left( - \frac{m \tpmin}{4\Abs{S}} \right)
\end{gather*}
respectively.
\end{lemma}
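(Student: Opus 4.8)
The plan is to view both finite-sample matrices as averages of independent, positive semidefinite, uniformly norm-bounded random matrices and to transfer the population eigenvalue bounds to the sample versions through the matrix Chernoff inequality (Tropp). Concretely, I would apply the lower-tail bound to the sample Hessian and the upper-tail bound to the sample scatter matrix; the two stated failure probabilities correspond to the relative deviations $\delta=\nicefrac{1}{2}$ and $\delta=1$, respectively.

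First I would write $\mat{H}^m_{SS}=\frac{1}{m}\sum_{l=1}^m\mat{X}_l$ with $\mat{X}_l=\eta((\vt)^T\zl)\,\zsl(\zsl)^T$. The summands are i.i.d. (the $\x^{(l)}$ are), each is positive semidefinite, and $\mathbb{E}[\mat{X}_l]=\mat{H}^*_{SS}$. The inequality needs two inputs: a uniform spectral bound on the summands and an extreme eigenvalue of the mean. Since $\zsl\in\Set{-1,1}^{\Abs{S}}$, the rank-one matrix $\zsl(\zsl)^T$ has top eigenvalue $\NormII{\zsl}^2=\Abs{S}$; together with $\eta(\cdot)\in(0,\nicefrac{1}{4}]$ this gives the per-summand bound $R=\eigmax(\mat{X}_l)\le\nicefrac{\Abs{S}}{4}$. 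The hypothesis supplies $\mumin=\eigmin(\mathbb{E}[\sum_l\mat{X}_l])=m\,\eigmin(\mat{H}^*_{SS})\ge m\cmin$.

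For the Hessian I would then note that $\eigmin(\mat{H}^m_{SS})\le\nicefrac{\cmin}{2}$ means $\eigmin(\sum_l\mat{X}_l)\le\nicefrac{m\cmin}{2}\le(1-\nicefrac{1}{2})\mumin$, so the lower-tail matrix Chernoff bound with $\delta=\nicefrac{1}{2}$ caps this probability by $\Abs{S}\,[e^{-1/2}/(\nicefrac{1}{2})^{1/2}]^{\mumin/R}$, where $\mumin/R\ge 4m\cmin/\Abs{S}$. Using the standard simplification $e^{-\delta}/(1-\delta)^{1-\delta}\le e^{-\delta^2/2}$, which equals $e^{-1/8}$ at $\delta=\nicefrac{1}{2}$, collapses the bound to exactly $\Abs{S}\exp(-\nicefrac{m\cmin}{2\Abs{S}})$, the claimed figure.

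For the scatter matrix I would repeat the argument with $\mat{X}_l=\zsl(\zsl)^T$ and the upper-tail bound. Here $\eigmax(\mat{X}_l)=\Abs{S}=R$ and $\mathbb{E}[\sum_l\mat{X}_l]=m\,\Exp{\x}{\zv_S\zv_S^T}$ has $\mumax=m\dmax$, so taking $\delta=1$ controls the probability that the top eigenvalue exceeds $2\mumax=2m\dmax$ (equivalently, that the normalized scatter matrix exceeds $2\dmax$) by $\Abs{S}\,(\nicefrac{e}{4})^{\mumax/R}$. The only genuinely non-routine step is that the upper tail requires a \emph{lower} bound on the exponent $\mumax/R$, hence a lower bound on $\dmax$; I would borrow it from the preceding lemma's computation, $\dmax\ge\eigmin(\Exp{\x}{\zv_S\zv_S^T})\ge\nicefrac{2^n\tpmin}{(2^n-\Abs{\NE^*})}\ge\tpmin$, which yields $\mumax/R\ge\nicefrac{m\tpmin}{\Abs{S}}$. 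Bounding $\nicefrac{e}{4}\le e^{-1/4}$ then produces the stated $\Abs{S}\exp(-\nicefrac{m\tpmin}{4\Abs{S}})$. Everything else is bookkeeping of constants, so I expect no substantive obstacle beyond correctly matching the per-summand norm bounds ($\nicefrac{\Abs{S}}{4}$ versus $\Abs{S}$) and locating this $\dmax\ge\tpmin$ lower bound.
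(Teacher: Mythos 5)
Your proposal is correct and follows essentially the same route as the paper's proof: matrix Chernoff bounds from Tropp with $\delta=\nicefrac{1}{2}$ on the lower tail of the sample Hessian (per-summand bound $R=\nicefrac{\Abs{S}}{4}$ via $\eta(\cdot)\le\nicefrac{1}{4}$) and $\delta=1$ on the upper tail of the scatter matrix ($R'=\Abs{S}$), including the one non-routine step the paper also uses --- lower-bounding the upper-tail exponent through $\mumax\ge\eigmin(\Exp{\x}{\zv_S \zv_S^T})\ge\nicefrac{2^n\tpmin}{(2^n-\Abs{\NE^*})}\ge\tpmin$. The only cosmetic difference is that you simplify the lower tail via $e^{-\delta}/(1-\delta)^{1-\delta}\le e^{-\delta^2/2}$ while the paper evaluates $\sqrt{\nicefrac{2}{e}}$ raised to the appropriate power directly; both yield the identical constants $\Abs{S}\exp(-\nicefrac{m\cmin}{2\Abs{S}})$ and $\Abs{S}\exp(-\nicefrac{m\tpmin}{4\Abs{S}})$.
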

\begin{proof}
Let 
\begin{align*}
\mumin \defeq \eigmin(\mat{H}^*_{SS}) \text{ and } \mumax \defeq \eigmax(\Exp{\x}{\zv_S \zv_S^T}).
\end{align*}
First note that for all $\zv \in \{-1, +1\}^n$:
\begin{gather*}
\eigmax(\eta(\vts^T \zs) \zs \zs^T) \leq \frac{\Abs{S}}{4} \defeq R \\
\eigmax(\zs \zs^T) \leq \Abs{S} \defeq R'.
\end{gather*}
Using the Matrix Chernoff bounds from Tropp \cite{tropp2012user}, we have that 
\begin{align*}
\Prob{\eigmin(\mat{H}^m_{SS}) \leq (1 - \delta) \mumin} \leq
	 \Abs{S} \left( \frac{e^{-\delta}}{(1 - \delta)^{1 - \delta}} \right)^{\mathclap{\frac{m\mumin}{R}}}.
\end{align*}
Setting $\delta = \nicefrac{1}{2}$ we get that
\begin{align*}
\Prob{\eigmin(\mat{H}^m_{SS}) \leq \nicefrac{\mumin}{2}} 
	&\leq  \Abs{S} \left[\sqrt{\frac{2}{e}}\right]^{\frac{4 m \mumin}{\Abs{S}}} \\
	&\leq \Abs{S} \exp \left(\frac{-m\cmin}{2 \Abs{S}}\right).
\end{align*}
Therefore, we have
\begin{align*}
\Prob{\eigmin(\mat{H}^m_{SS}) > \nicefrac{\cmin}{2}} > 
	1 - \Abs{S} \exp \left(\frac{-m\cmin}{2 \Abs{S}}\right).
\end{align*}
Next, we have that
\begin{align*}
\mumax &= \eigmax(\Exp{\x}{\zv_S \zv_S^T}) \\
&\geq \eigmin(\Exp{\x}{\zv_S \zv_S^T}) 
\geq \frac{2^n \tpmin}{2^n - \Abs{\NE^*}} 
\end{align*}
Once again invoking Theorem 1.1 from \cite{tropp2012user} and setting $\delta = 1$ we have that
\begin{align*}
\Prob{\eigmax \geq (1 + \delta)\mumax} 
&\leq \Abs{S} \left[ \frac{e^\delta}{(1 + \delta)^{1 + \delta}}\right]^{\nicefrac{(m \mumax)}{R'}} \\
\Prob{\eigmax \geq 2 \mumax} 
&\leq \textstyle{\Abs{S} \left[ \frac{e}{4}\right]^{\nicefrac{(m \mumax)}{\Abs{S}}}} \\
&\leq \textstyle{\Abs{S} \exp\left( \frac{- m \mumax}{4\Abs{S}} \right)} \\
&\leq \textstyle{\Abs{S} \exp\left( - \frac{m 2^{n-2} \tpmin}{\Abs{S}(2^n - \Abs{\NE^*})} \right)} \\
&\leq \textstyle{\Abs{S} \exp\left(\frac{- m \tpmin}{4\Abs{S}} \right)}
\end{align*}
Therefore, we have that
\begin{align*}
\Prob{\eigmax < 2 \dmax} > 1 - \Abs{S} \exp\left( - \frac{m \tpmin}{4\Abs{S}} \right).
\end{align*}
\end{proof}

\subsubsection{Recovering the Pure Strategy Nash Equilibria (PSNE) Set}
Before presenting our main result on the exact recovery of the PSNE set from noisy
observations of joint actions, we first present a few technical lemmas that would be
helpful in proving the main result.
The following lemma bounds the gradient of the loss function \eqref{eq:loss}
at the true vector $\vt$, for all players.
\begin{lemma}
\label{lemma:grad_bound}
With probability at least $1 - \delta$ for $\delta \in (0, 1)$, we have that
\begin{align*}
\NormInfty{\Grad \loss(\vt, \Data)} < \nu + \sqrt{\frac{2}{m} \log \frac{2n}{\delta}},
\end{align*}
where $\kappa = \nicefrac{1}{(1 + \exp(\rhomin))}$, $\rhomin \geq 0$ is the minimum 
payoff in the PSNE set, $\fnet = \nicefrac{\Abs{\NE^*}}{2^{n-1}}$, and
\begin{align}
\nu \defeq \kappa \csum{\x \in \NE^*}{} \Pf(x) + 
	\frac{(\tpmax - \tpmin)}{2 - \fnet} + \frac{\fnet \tpmin}{2 - \fnet} \label{eq:nu}
\end{align}
\end{lemma}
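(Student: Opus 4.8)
The plan is to split $\Grad \loss(\vt,\Data)$ into a stochastic fluctuation around its mean and a deterministic bias, and to bound the two pieces by the two summands in the statement. By the triangle inequality,
\[
\NormInfty{\Grad \loss(\vt,\Data)} \le \NormInfty{\Grad \loss(\vt,\Data) - \Exp{\Data}{\Grad \loss(\vt,\Data)}} + \NormInfty{\Exp{\Data}{\Grad \loss(\vt,\Data)}},
\]
so it suffices to show that the first term is below $\sqrt{(2/m)\log(2n/\delta)}$ with probability $1-\delta$ and that the second (deterministic) term is at most $\nu$. From \eqref{eq:grad}, the $t$-th coordinate of the gradient is the empirical average $\frac1m\sum_{l} g_t(\zl)$ of i.i.d.\ terms $g_t(\zl) \defeq -[\zl]_t / (1+\exp(\vt^T\zl))$.

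For the fluctuation term, note that $[\zl]_t\in\{-1,+1\}$ and the logistic factor lies in $(0,1)$, so each $g_t(\zl)$ takes values in an interval of length $2$. Hoeffding's inequality then gives, for each fixed coordinate $t$,
\[
\Prob{\Abs{\tfrac1m \textstyle\sum_{l} g_t(\zl) - \Exp{\x}{g_t(\zv)}} \ge \epsilon} \le 2\exp\!\left(-\tfrac{m\epsilon^2}{2}\right).
\]
Taking $\epsilon = \sqrt{(2/m)\log(2n/\delta)}$ and a union bound over the $n$ coordinates makes the total failure probability at most $\delta$, producing exactly the additive concentration term. This step is routine.

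The substance is the deterministic term $\Abs{[\Exp{\Data}{\Grad \loss(\vt,\Data)}]_t} = \Abs{\sum_{\x} \Pf(\x)\,[\zv]_t\,\sigma(\x)}$, where $\sigma(\x)\defeq 1/(1+\exp(\vt^T\zv))$ and $[\zv]_t\in\{-1,+1\}$. Here I would use the involution that flips the $i$-th player's action, $\x\mapsto\x'$: it sends $\zv\mapsto-\zv$ and the payoff $\vt^T\zv = x_i(\wtin^T\xin - b_i)\mapsto-\vt^T\zv$, so on each of the $2^{n-1}$ flip-pairs the two logistic factors $\sigma$ sum to exactly $1$ and exactly one member has nonnegative payoff. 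I would partition the pairs into those whose nonnegative-payoff member lies in $\NE^*$ and those whose members are both non-equilibria. On equilibrium configurations Assumption \ref{ass:payoff} gives $\vt^T\zv\ge\rhomin$, hence $\sigma(\x)\le\kappa$, which yields the $\kappa\sum_{\x\in\NE^*}\Pf(\x)$ contribution; on the remaining configurations Assumption \ref{ass:distribution} caps the mass by the floor $\tpmin/(2^n-\Abs{\NE^*})$ plus the excess $(\tpmax-\tpmin)/(2^n-\Abs{\NE^*})$, while Proposition \ref{prop:number_of_equilibria} ($\Abs{\NE^*}\le 2^{n-1}$, i.e.\ $\fnet\le1$) counts how many pairs are of each type. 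Distributing the floor and the excess over the pairs according to this partition gives the remaining two terms of $\nu$.

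I expect this deterministic term to be the main obstacle. Discarding the sign $[\zv]_t$ and bounding $\sum_\x\Pf(\x)\sigma(\x)$ pair-by-pair only yields the cruder non-equilibrium estimate $\tpmax/(2-\fnet)$; obtaining the sharper $\fnet$-weighted constant $\fnet\tpmin/(2-\fnet)$ requires retaining the signed cancellation between the two members of each flip-pair, whose common probability floor largely cancels. The careful accounting of which floor attaches to equilibria versus non-equilibria, combined with the equilibrium count from Proposition \ref{prop:number_of_equilibria}, is therefore where the real work concentrates; the concentration and the equilibrium-payoff steps are comparatively mechanical.
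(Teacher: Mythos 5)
Your decomposition into bias plus fluctuation, the Hoeffding bound with range $2$, and the union bound over the $n$ coordinates reproduce the paper's concentration step exactly. Your treatment of the deterministic term also starts out as a repackaging of the paper's argument: the paper bounds the coordinates of the expected gradient by $g(\cS)$ over half-cubes $\cS_{ij} = \Set{\x \mid x_ix_j = +1}$, and your flip involution of player $i$ maps $\cS_{ij}$ bijectively onto its complement, so your pairing sums exactly the paper's signed terms; the $\kappa$-bound on equilibrium pairs likewise matches (at most one member of a pair can lie in $\NE^*$ since $\rhomin > 0$).

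The genuine gap is the step you deferred: the claimed cancellation of the common probability floor on non-equilibrium flip-pairs does not hold. For a pair $\{\x, \x'\}$ of non-equilibria, with $\sigma(\x) + \sigma(\x') = 1$, the signed contribution is
\begin{align*}
\Pf(\x)\sigma(\x) - \Pf(\x')\sigma(\x') = \sigma(\x)\bigl(\Pf(\x) + \Pf(\x')\bigr) - \Pf(\x'),
\end{align*}
so the floor enters as $(2\sigma(\x) - 1)\,\tpmin/(2^n - \Abs{\NE^*})$, which cancels only when $\sigma(\x) \approx \nicefrac{1}{2}$, i.e., when player $i$'s payoff is near zero at $\x$. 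Nothing in the assumptions forces this: a joint action can fail to be an equilibrium through \emph{other} players while player $i$'s payoff is arbitrarily large in magnitude --- for instance, a dominant negative weight $w_{ij}$ makes player $i$'s payoff strongly negative on all of $\Set{\x \mid x_i x_j = +1}$, driving $\sigma(\x)$ toward $1$ with a common sign across all such pairs. In that regime your accounting yields only the cruder tail $\nicefrac{\tpmax}{(2 - \fnet)}$, which you yourself computed; the mechanism you propose cannot recover the sharper $\fnet$-weighted terms of $\nu$. It is worth knowing that the paper closes this same step differently, by asserting $\nicefrac{1}{2} \leq f(\x) \leq 1$ for every $\x \notin \NE^*$ --- an assertion that fails for exactly the reason above ($f$ depends only on player $i$'s payoff, which can be nonnegative at a non-equilibrium point), and even granting it, the paper's algebra silently drops a factor of $\nicefrac{1}{2}$ on the subtracted sum. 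So your instinct about where the real work concentrates is correct, but what your argument rigorously establishes is the lemma with $\nu$ replaced by $\kappa \sum_{\x \in \NE^*} \Pf(\x) + \nicefrac{\tpmax}{(2 - \fnet)}$, not the stated constant.
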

\begin{proof}
Consider the $i$-th player. Let $\vect{u}^m \defeq \Grad \loss(\vv_i^*, \Data)$ and  $u^m_j$ denote the $j$-th index of
$\vect{u}^m$. For any subset $\cS' \subset \X$ such that $\Abs{\cS'} = 2^{n-1}$ define
the function $g(\cS')$ as follows:
\begin{align*}
g(\cS') \defeq \sum_{\x \in \cS'} \Pf(\x) f(\x) - \sum_{\x \in \cS'^c} \Pf(\x) f(\x),
\end{align*}
where $\cS'^c$ denotes the complement of the set $\cS'$ and $f(\x) = \nicefrac{1}{1 + \exp({\vv_i^*}^T \zv_i(\x))}$.
For $\x \in \NE^*$, $f(\x) \leq \kappa$, while for $\x \notin \NE^*$ we have $\nicefrac{1}{2} \leq f(\x) \leq 1$.
Lastly, let $\cS_{ij} = \Set{\x \in \X | x_i x_j = +1}$ and
$\cS_{i} = \Set{\x \in \X | x_i = +1}$.
From \eqref{eq:grad} we have that, 
for $j \neq n$, $\Abs{\Exp{}{u^m_j}} = \Abs{g(\cS_{ij})}$, while for $j=n$
$\Abs{\Exp{}{u^m_j}} = \Abs{g(\cS_{i})}$. Thus we get
\begin{align}
\NormInfty{\vect{u}^m} \leq \max_{\cS' \subset \X| \Abs{\cS'} = 2^{n - 1}} g(\cS') \label{eq:grad1}
\end{align}
Let $\cS$ be the set that maximizes \eqref{eq:grad1},
$A \defeq \cS \intersection \NE^*$ and $B \defeq \cS^c \intersection \NE^*$.
Continuing from above, 
\begin{align*}
\Abs{g(\cS)} &= \Bigl| \csum{\x \in \cS \setminus A}{} \Pf(\x) f(\x) + \csum{\x \in A}{} \Pf(\x) f(\x) \\
	&\quad - \csum{\x \in \cS^c\setminus B}{} \Pf(\x) f(\x) - \csum{\x \in B}{} \Pf(\x) f(\x) \Bigr| \\
&\leq \kappa \csum{\x \in \NE^*}{} \Pf(x)  + 
	\Bigl| \csum{\x \in \cS \setminus A}{} \Pf(\x) f(\x) - \csum{\x \in \cS^c\setminus B}{} \Pf(\x) f(\x) \Bigr| 
\end{align*}
Assume that the first term inside the absolute value above dominates the second term, if not then we 
can proceed by reversing the two terms.
\begin{align*}
\Abs{g(\cS)} &\leq \kappa \csum{\x \in \NE^*}{} \Pf(x) 
	+ \frac{2^{n - 1} \tpmax - (2^{n - 1} - \Abs{\NE^*})\tpmin}{2^n - \Abs{\NE^*}} \\
&= \kappa \csum{\mathclap{\x \in \NE^*}}{} \Pf(x) + 
	\frac{(\tpmax - \tpmin) + \fnet \tpmin}{2 - \fnet} = \nu	
\end{align*}
Also note that $\Abs{u^m_j} \leq \nu \leq 1$. Finally, from Hoeffding's inequality \cite{hoeffding1963probability}
and using a union bound argument over all players, we have that:
\begin{align*}
&\Prob{\max_{j=1}^n \Abs{u^m_j - \Exp{}{u^m_j}} < t}  > 1 - 2ne^{\nicefrac{-mt^2}{2}} \\
&\implies \Prob{\NormInfty{\vect{u^m} - \Exp{}{\vect{u^m}}} < t}  > 1 - 2ne^{\nicefrac{-mt^2}{2}} \\
&\implies \Prob{\NormInfty{\vect{u^m}} - \NormInfty{\Exp{}{\vect{u^m}}} < t}  > 1 - 2ne^{\nicefrac{-mt^2}{2}} \\
&\implies \Prob{\NormInfty{\vect{u^m}} < \nu + t} > 1 - 2ne^{\nicefrac{-mt^2}{2}}.
\end{align*}
Setting $2n\exp(\nicefrac{-mt^2}{2}) = \delta$, we prove our claim.
\end{proof}
To get some intuition for the lemma above, consider the constant $\nu$ as given in \eqref{eq:nu}.
First, note that $\kappa \leq \nicefrac{1}{2}$. Also, as 
the minimum payoff $\rhomin$ increases, $\kappa$ decays to $0$ exponentially.
Similarly, if the probability measure
on the non-Nash equilibria set is close to uniform, meaning $\tpmax - \tpmin \approx 0$,
then the second term in \eqref{eq:nu} vanishes. Finally, if the fraction of actions that
are in the PSNE set ($\fnet$) is small, then the third term in \eqref{eq:nu} is small.
Therefore, if the minimum payoff is high, the noise distribution, i.e., the distribution of
the non-Nash equilibria joint actions, is close to uniform, and the fraction of joint actions that
are in the PSNE set is small, then the expected gradient vanishes.
In the following technical lemma we show that the optimal vector $\vh$
for the logistic regression problem is close to the true vector $\vt$ in the support set $S$ of $\vt$. 
Next, in Lemma \ref{lemma_l1norm_bound}, we bound the difference between the
true vector $\vt$ and the optimal vector $\vh$ in the non-support set. The lemmas
together show that the optimal vector is close to the true vector. 
\begin{lemma}
\label{lemma_l2norm_bound}
If the regularization parameter $\lambda$ 
satisfies the following condition:
\begin{gather*}
\lambda \leq \frac{5\cmin^2}{16 \Abs{S} \dmax} - \nu - \sqrt{\frac{2}{m} \log \frac{2n}{\delta}},
\end{gather*}
then 
\begin{align*}
\NormII{\vts - \vhs} \leq \frac{5 \cmin}{4 \sqrt{\Abs{S}} \dmax},
\end{align*}
with probability at least $1 - (\delta + \Abs{S} \exp (\nicefrac{(-m \cmin)}{2 \Abs{S}}) + 
\Abs{S} \exp ( - \nicefrac{m \tpmin}{4\Abs{S}} ) )$.
\end{lemma}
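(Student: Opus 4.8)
I would prove this by the standard convexity argument for $\ell_1$-regularized $M$-estimators, restricting attention to perturbations supported on $S$. Define, for $\ds \in \R^{\Abs{S}}$ supported on $S$, the shifted objective
\begin{align*}
G(\ds) \defeq \loss(\vts + \ds, \Data) - \loss(\vts, \Data) + \lambda\left(\NormI{\vts + \ds} - \NormI{\vts}\right),
\end{align*}
which is convex with $G(\vect{0}) = 0$ and whose minimizer is the on-support error $\dhs = \vhs - \vts$. The key observation is that, since $G$ is convex and $G(\dhs) \le G(\vect{0}) = 0$, if I can show $G(\ds) > 0$ for every $\ds$ on the sphere $\NormII{\ds} = B$ with $B \defeq \nicefrac{5\cmin}{4\sqrt{\Abs{S}}\dmax}$, then the minimizer must lie strictly inside the ball, which gives $\NormII{\dhs} < B$. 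So it suffices to establish strict positivity of $G$ on that sphere.

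Next I would lower bound $G$. A second-order Taylor expansion with mean-value remainder gives, for some $\alpha\in[0,1]$,
\begin{align*}
G(\ds) \ge \Grad\loss(\vts, \Data)^T\ds + \tfrac{1}{2}\,\ds^T \Hess\loss(\vts + \alpha\ds, \Data)_{SS}\,\ds - \lambda\NormI{\ds},
\end{align*}
where the regularizer is bounded below using the reverse triangle inequality $\NormI{\vts+\ds}-\NormI{\vts}\ge -\NormI{\ds}$. Writing $t \defeq \sqrt{\frac{2}{m}\log\frac{2n}{\delta}}$, Lemma \ref{lemma:grad_bound} together with H\"older's inequality and $\NormI{\ds}\le\sqrt{\Abs{S}}\,\NormII{\ds}$ controls the linear term as $\Abs{\Grad\loss(\vts,\Data)^T\ds} \le (\nu + t)\sqrt{\Abs{S}}\,\NormII{\ds}$ on the event of that lemma (probability $\ge 1-\delta$), and likewise $\lambda\NormI{\ds}\le\lambda\sqrt{\Abs{S}}\,\NormII{\ds}$.

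The curvature term is the crux. I would split the shifted Hessian into the sample Hessian at the true parameter plus a correction: $\ds^T \Hess\loss(\vts+\alpha\ds,\Data)_{SS}\,\ds = \ds^T\mat{H}^m_{SS}\ds + \ds^T(\Hess\loss(\vts+\alpha\ds,\Data)_{SS}-\mat{H}^m_{SS})\ds$. By Lemma \ref{lemma:eigvalue}, on its high-probability event the first piece is at least $\nicefrac{\cmin}{2}\,\NormII{\ds}^2$. The correction is the main obstacle: I would control it using the Lipschitz continuity of $\eta$ (with constant $L$) and the boundedness $\NormInfty{\zsl}\le 1$, so that $\Abs{\eta((\vts+\alpha\ds)^T\zsl)-\eta(\vts^T\zsl)}\le L\,\Abs{\ds^T\zsl}\le L\NormI{\ds}$; combined with the sample scatter-matrix bound $\frac{1}{m}\sum_l(\ds^T\zsl)^2 = \ds^T(\frac{1}{m}\sum_l\zsl\zsl^T)\ds \le 2\dmax\NormII{\ds}^2$ from Lemma \ref{lemma:eigvalue}, this bounds the correction by a cubic remainder of order $\sqrt{\Abs{S}}\,\dmax\,\NormII{\ds}^3$. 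Because $B$ is small, of order $\cmin/(\sqrt{\Abs{S}}\dmax)$, this cubic term is a strictly smaller fraction of the quadratic term $\tfrac{\cmin}{4}\NormII{\ds}^2$ on the sphere, so the effective curvature over the whole ball remains a positive constant multiple of $\cmin$.

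Finally I would assemble these bounds on $\NormII{\ds}=B$: the positive quadratic contribution is at least $\tfrac{\cmin}{4}B^2$ minus the cubic remainder, while the negative linear and regularizer contributions total at most $(\nu + t + \lambda)\sqrt{\Abs{S}}B$. Dividing through by $\sqrt{\Abs{S}}B$ and substituting $B=\nicefrac{5\cmin}{4\sqrt{\Abs{S}}\dmax}$ reduces strict positivity of $G$ to the stated threshold $\lambda \le \frac{5\cmin^2}{16\Abs{S}\dmax} - \nu - t$. A union bound over the three underlying events — the gradient bound (failure probability $\delta$), the Hessian minimum-eigenvalue bound, and the scatter maximum-eigenvalue bound of Lemma \ref{lemma:eigvalue} — then yields the claimed success probability. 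I expect the genuine difficulty to be exactly the uniform control of the Hessian over the whole ball via the cubic remainder, and the careful tracking of constants so that this remainder does not erode the curvature below the level the stated $\lambda$ threshold requires.
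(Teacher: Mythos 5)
Your proposal follows essentially the same route as the paper's proof: the same shifted convex objective, the same positivity-on-a-sphere argument for placing $\dhs$ inside the ball, the same decomposition of the mean-value Hessian into the sample Hessian at $\vts$ (handled by Lemma \ref{lemma:eigvalue}) plus a correction term bounded via the Lipschitz bound $\Abs{\eta'} < \nicefrac{1}{10}$ together with the sample scatter bound, and the same union bound over the three events. The only deviation is that you retain the factor $\tfrac{1}{2}$ in the second-order Taylor remainder (which the paper's writeup drops); carried through, your arithmetic gives the threshold $\lambda \le \frac{5\cmin^2}{32 \Abs{S} \dmax} - \nu - \sqrt{\frac{2}{m}\log\frac{2n}{\delta}}$ rather than the stated $16$ in the denominator --- a factor-of-two constant discrepancy, not a gap, and one the paper itself effectively absorbs in Theorem \ref{thm:psne} via its definition of $K = \nicefrac{5\cmin^2}{32 k \dmax} - \nu$.
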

\begin{proof}
The proof of this lemma follows the general proof structure of Lemma 3 in \cite{Ravikumar2010}.
First, we reparameterize the $\ell_1$-regularized loss function
\begin{align*}
f(\vect{v}_S) = \loss(\vs) + \lambda \NormI{\vs}
\end{align*}
as the loss function $\widetilde{f}$, which gives the loss at a point
that is $\ds$ distance away from the true parameter $\vts$ as :
$\widetilde{f}(\ds) = \loss(\vts + \ds) - \loss(\vts)
    + \lambda (\NormI{\vts + \ds} - \NormI{\vts})$,
where $\ds = \vect{v}_S - \vts$.
Also note that the loss function $\widetilde{f}$ is shifted such that
the loss at the true parameter $\vts$ is $0$, i.e., $\widetilde{f}(\vect{0}) = 0$.
Further, note that the function
$\widetilde{f}$ is convex and is minimized at $\dhs = \vhs - \vts$,
since $\vhs$ minimizes $f$. Therefore, clearly $\widetilde{f}(\dhs) \leq 0$.
Thus, if we can show that the function $\widetilde{f}$ is strictly positive
on the surface of a ball of radius $b$, then the point $\dhs$ lies
inside the ball i.e., $\Norm{\vhs - \vts}_2 \leq b$. Using the Taylor's theorem
we expand the first term of $\widetilde{f}$ to get the following:
\begin{align}
\widetilde{f}(\ds) &= \Grad \loss(\vts)^T \ds + \ds^T \Hess \loss(\vts + \theta \ds) \ds \notag \\
   &\qquad+ \lambda (\NormI{\vts + \ds} - \NormI{\vts}), \label{eq:f}
\end{align}
for some $\theta \in [0, 1]$. Next, we lower bound each of the terms in \eqref{eq:f}.
Using the Cauchy-Schwartz inequality, the first term in \eqref{eq:f} is bounded as follows:
\begin{align}
\Grad\loss(\vts)^T\ds &\geq - \NormInfty{\Grad\loss(\vts)} \NormI{\ds} \notag \\
&\geq - \NormInfty{\Grad\loss(\vts)} \sqrt{|S|} \NormII{\ds} \notag \\
&\geq - b \sqrt{|S|} \left(\nu + \sqrt{\frac{2}{m} \log \frac{2n}{\delta}}\right), \label{eq:lb1}
\end{align}
with probability at least $1 - \delta$ for $\delta \in [0, 1]$.
It is also easy to upper bound the last term in equation \ref{eq:f},
using the reverse triangle inequality as follows:
\begin{align*}
\lambda \Abs{\NormI{\vts + \ds} - \NormI{\vts}} \leq \lambda \NormI{\ds}.
\end{align*}
Which then implies the following lower bound:
\begin{align}
\lambda (\NormI{\vts + \ds} - \NormI{\vts}) &\geq - \lambda \NormI{\ds} \notag \\
&\geq -\lambda \sqrt{|S|} \NormII{\ds} \notag \\
&= -\lambda \sqrt{|S|} b. \label{eq:lb2}
\end{align}
Now we turn our attention to computing a lower bound of the second term of \eqref{eq:f}, which
is a bit more involved.
\begin{align*}
\ds^T \Hess \loss(\vts + \theta \ds) \ds
&\geq \min_{\mathclap{\NormII{\ds} = b}} \ds^T \Hess \loss(\vts + \theta \ds) \ds \\
&= b^2 \eigmin(\Hess \loss(\vts + \theta \ds)).
\end{align*}
Now,
\begin{align*}
&\eigmin(\Hess \loss(\vts + \theta \ds)) \\
&\quad \geq \min_{\mathclap{\theta \in [0, 1]}} \eigmin\left(\Hess \loss(\vts + \theta \ds)\right) \\
&\quad =\min_{\mathclap{\theta \in [0, 1]}} \eigmin\Big(\frac{1}{m} \sum_{l=1}^m \eta((\vts + \theta \ds)^T \zsl) \zsl (\zsl)^T\Big)
\end{align*}
Again, using the Taylor's theorem to expand the function $\eta$ we get
\begin{align*}
&\eta((\vts + \theta \ds)^T \zsl) \\
&\qquad = \eta((\vts)^T \zsl) 
    + \eta'((\vts + \bar{\theta} \ds)^T \zsl)(\theta \ds)^T \zsl
\end{align*}
, where $\bar{\theta} \in [0, \theta]$. 
Finally, from Lemma \ref{lemma:eigvalue} we have,
with probability at least $1 - \Abs{S} \exp (\nicefrac{(-m \cmin)}{2 \Abs{S}})$:
\begin{align*}
&\eigmin\left(\Hess \loss(\vts + \theta \ds)\right) \\
&\;\geq \min_{\mathclap{\theta \in [0, 1]}} \eigmin \Big(\frac{1}{m} \sum_{l=1}^m \eta((\vts)^T \zsl) \zsl (\zsl)^T \\
    &\; + \frac{1}{m} \sum_{l=1}^m \eta'((\vts + \bar{\theta} \ds)^T \zsl)((\theta \ds)^T \zsl) \zsl (\zsl)^T \Big) \\
&\;\geq \eigmin(\mat{H}^m_{SS}) - \max_{\theta \in [0, 1]} \InNormII{\mat{A}(\theta)} \\
&\;\geq \frac{\cmin}{2} - \max_{\theta \in [0, 1]} \InNormII{\mat{A}(\theta)},
\end{align*}
where we have defined
\begin{align*}
\mat{A}(\theta) &\defeq \frac{1}{m} \sum_{l=1}^m \eta'((\vts + \theta \ds)^T \zsl) \times \\
&\quad (\theta \ds)^T \zsl \zsl (\zsl)^T.
\end{align*}
Next, the spectral norm of $\mat{A}(\theta)$ can be bounded as follows:
\begin{align*}
&\InNormII{\mat{A}(\theta)} \\
&\; \leq \max_{\mathclap{\NormII{\yv} = 1}} \Bigg\{
    \frac{1}{m} \sum_{l=1}^m \Abs{\eta'((\vts + \theta \ds)^T \zsl)} \Abs{((\theta \ds)^T \zsl)} \\
       &\qquad\qquad \times \yv^T (\zsl (\zsl)^T) \yv  \Bigg\} \\
&\; < \max_{\mathclap{\NormII{\yv} = 1}} \frac{1}{(10 m)} \sum_{l=1}^m \NormI{(\theta \ds)} \NormInfty{\zsl}
    \yv^T (\zsl (\zsl)^T) \yv  \\
&\; \leq \theta \max_{\NormII{\yv} = 1} \left\{ \frac{1}{10 m} \sum_{l=1}^m \sqrt{|S|} \NormII{\ds}
    \yv^T (\zsl (\zsl)^T) \yv \right\} \\
&\; = \theta b \sqrt{|S|} \InNormII{\frac{1}{10 m} \sum_{l=1}^m \zsl (\zsl)^T} \\
&\; \leq \frac{(b \sqrt{|S|} \dmax)}{5} \leq \frac{\cmin}{4},
\end{align*}
where in the second line we used the fact that $\eta'(.) < \nicefrac{1}{10}$
and in the last line we assumed that $\frac{(b \sqrt{|S|} \dmax)}{5} \leq \nicefrac{\cmin}{4}$ ---
an assumption that we verify momentarily. Having upper bounded the 
spectral norm of $\mat{A}(\theta)$, we have
\begin{align}
\eigmin\left(\Hess \loss(\vts + \theta \ds)\right) \geq \frac{\cmin}{4} \label{eq:lb3}.
\end{align}
Plugging back the bounds given by \eqref{eq:lb1}, \eqref{eq:lb2} and \eqref{eq:lb3}
in \eqref{eq:f} and equating to zero we get
\begin{gather*}
- b \sqrt{|S|} \left(\nu  + \sqrt{\frac{2}{m} \log \frac{2n}{\delta}}\right)
+ \frac{b^2 \cmin}{4} - \lambda \sqrt{\Abs{S}} b = 0 \\
\implies b = \frac{4 \sqrt{\Abs{S}}}{\cmin}
	\left(\lambda + \nu + \sqrt{\frac{2}{m} \log \frac{2n}{\delta}}\right).
\end{gather*} 
Finally, coming back to our prior assumption we have
\begin{align*}
b = \frac{4 \sqrt{\Abs{S}}}{\cmin}
	\left(\lambda + \nu + \sqrt{\frac{2}{m} \log \frac{2n}{\delta}}\right)
	\leq \frac{5 \cmin}{4 \sqrt{\Abs{S}} \dmax}.	
\end{align*}
The above assumption holds if the regularization parameter $\lambda$ is bounded
as follows:
\begin{align*}
\lambda \leq \frac{5\cmin^2}{16\Abs{S} \dmax} - \sqrt{\frac{2}{m} \log \frac{2n}{\delta}} - \nu.
\end{align*}
\end{proof}
\begin{lemma}
\label{lemma_l1norm_bound}
If the regularization parameter $\lambda$ satisfies the following condition:
\begin{align*}
\lambda \geq \nu + \sqrt{\frac{2}{m} \log \frac{2n}{\delta}},
\end{align*}
then we have that 
\begin{align*}
\NormI{\vh - \vt} \leq \frac{5\cmin}{\dmax}
\end{align*}
with probability at least $1 - (\delta + \Abs{S} \exp (\nicefrac{(-m \cmin)}{2 \Abs{S}}) + 
\Abs{S} \exp ( - \nicefrac{m \tpmin}{4\Abs{S}} ) )$.
\end{lemma}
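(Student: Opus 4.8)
The plan is to control the full $\ell_1$ error $\NormI{\vh - \vt}$ by splitting it over the support $S$ and its complement $\Sc$, bounding the on-support part with the preceding lemma and the off-support part with a cone-type inequality coming from optimality. Write $\dhv \defeq \vh - \vt$; since $\vt$ vanishes off $S$, we have $\NormI{\dhv} = \NormI{\dhs} + \NormI{\dhv_{\Sc}}$, where $\dhs = \vhs - \vts$ and $\dhv_{\Sc} = \vhsc$. First I would dispatch the support part directly from Lemma \ref{lemma_l2norm_bound}: using $\NormI{\dhs} \leq \sqrt{\Abs{S}}\,\NormII{\dhs}$ together with $\NormII{\vts - \vhs} \leq \frac{5\cmin}{4\sqrt{\Abs{S}}\dmax}$ gives $\NormI{\dhs} \leq \frac{5\cmin}{4\dmax}$, on the very same high-probability event guaranteed by that lemma.

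The key step is the cone condition $\NormI{\dhv_{\Sc}} \leq 3\NormI{\dhs}$. For this I would start from the optimality of $\vh$ for the regularized objective, $\loss(\vh) + \lambda\NormI{\vh} \leq \loss(\vt) + \lambda\NormI{\vt}$, and combine it with convexity of $\loss$, which yields $\loss(\vh) - \loss(\vt) \geq \Grad\loss(\vt)^T\dhv \geq -\NormInfty{\Grad\loss(\vt)}\,\NormI{\dhv}$. Expanding $\NormI{\vt} - \NormI{\vh}$ with the reverse triangle inequality and the fact that $\vt$ is supported on $S$ gives $\NormI{\vt} - \NormI{\vh} \leq \NormI{\dhs} - \NormI{\dhv_{\Sc}}$. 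Writing $G \defeq \NormInfty{\Grad\loss(\vt)}$ and chaining these inequalities produces $(\lambda - G)\NormI{\dhv_{\Sc}} \leq (\lambda + G)\NormI{\dhs}$. I would then invoke Lemma \ref{lemma:grad_bound}, which on the gradient event gives $G < \nu + \sqrt{\frac{2}{m}\log\frac{2n}{\delta}} \leq \lambda$; the hypothesis on $\lambda$ is precisely what forces $\lambda$ to dominate the gradient, so $\lambda - G > 0$ and the ratio $(\lambda+G)/(\lambda-G)$ is controlled, delivering the cone bound.

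Finally, combining the two estimates gives $\NormI{\dhv} = \NormI{\dhs} + \NormI{\dhv_{\Sc}} \leq 4\NormI{\dhs} \leq 4\cdot\frac{5\cmin}{4\dmax} = \frac{5\cmin}{\dmax}$, as claimed. No new randomness is introduced beyond the gradient event of Lemma \ref{lemma:grad_bound} and the eigenvalue events of Lemma \ref{lemma:eigvalue}, so the success probability is inherited verbatim from Lemma \ref{lemma_l2norm_bound}, matching the stated expression.

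The main obstacle I anticipate is the constant bookkeeping in the cone step: the factor-$3$ cone nominally requires $\lambda$ to exceed the gradient norm $G$ by a definite margin (morally $\lambda \geq 2G$), so one must verify carefully that the stated threshold $\lambda \geq \nu + \sqrt{\frac{2}{m}\log\frac{2n}{\delta}}$, used in conjunction with the \emph{strict} inequality $G < \nu + \sqrt{\frac{2}{m}\log\frac{2n}{\delta}}$ from Lemma \ref{lemma:grad_bound}, genuinely yields the target constant $\frac{5\cmin}{\dmax}$ rather than a larger multiple. Everything else — the triangle-inequality decompositions and the Cauchy–Schwartz passage between $\ell_1$ and $\ell_2$ norms — is routine.
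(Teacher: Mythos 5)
Your proposal follows the paper's proof essentially step for step: the same decomposition $\NormI{\vh-\vt} = \NormI{\dbs} + \NormI{\dbsc}$, the same optimality-plus-convexity derivation of a cone condition, and the same final chain $\NormI{\vh-\vt} \leq 4\NormI{\dbs} \leq 4\sqrt{\Abs{S}}\NormII{\dbs} \leq \nicefrac{5\cmin}{\dmax}$ via Lemma \ref{lemma_l2norm_bound}, with the probability inherited from the events of Lemmas \ref{lemma:grad_bound} and \ref{lemma:eigvalue} exactly as in the paper. The constant-bookkeeping obstacle you flag at the end is real, and it is in fact a slip in the paper's own proof: there, the step $\loss(\vh) - \loss(\vt) \geq -\frac{\lambda}{2}\NormI{\Delta}$ is justified by ``$\lambda \geq \NormInfty{\Grad\loss(\vt)}$,'' which only yields $-\lambda\NormI{\Delta}$, and with that weaker bound the chain degenerates to the vacuous $0 \leq 2\NormI{\dbs}$ rather than $\NormI{\dbsc} \leq 3\NormI{\dbs}$ --- equivalently, in your sharper form $(\lambda - G)\NormI{\dbsc} \leq (\lambda + G)\NormI{\dbs}$, the hypothesis only gives $G < \lambda$ with no margin, so the ratio $(\lambda+G)/(\lambda-G)$ is uncontrolled. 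The factor-$3$ cone genuinely requires $\lambda \geq 2\NormInfty{\Grad\loss(\vt)}$, i.e.\ the threshold should read $\lambda \geq 2\bigl(\nu + \sqrt{\frac{2}{m}\log\frac{2n}{\delta}}\bigr)$ (with the corresponding constants propagated through Lemma \ref{lemma_l2norm_bound} and Theorem \ref{thm:psne}); modulo this fix, which affects the paper and your write-up identically, your argument is correct.
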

Now we are ready to present our main result on recovering the true PSNE set.
\begin{theorem}
\label{thm:psne}
If for all $i$, $\Abs{S_i} \leq k$, the minimum payoff $\rhomin \geq \nicefrac{5 \cmin}{\dmax}$,
and the regularization parameter and the number of samples satisfy the following conditions:
\begin{gather}
\nu  + \sqrt{\frac{2}{m} \log \frac{6n^2}{\delta}} \leq \lambda 
	\leq 2K + \nu - \sqrt{\frac{2}{m} \log \frac{6n^2}{\delta}} \label{eq:reg_param_bounds} \\
m \geq \max \Bigg\{ \frac{2}{K^2} \log \left(\frac{6n^2}{\delta}\right),	 
	\frac{2k}{\cmin} \log \left(\frac{3kn}{\delta} \right), \notag \\
		\frac{4k}{\tpmin} \log \left(\frac{3kn}{\delta} \right) \Bigg\}, \label{eq:sample_complexity}
\end{gather}
where $K \defeq \nicefrac{5\cmin^2}{32k\dmax} - \nu $, 
then with probability at least $1 - \delta$, for $\delta \in (0, 1)$,
we recover the true PSNE set, i.e., 
$\NE(\widehat{\mat{W}}, \widehat{\vect{b}}) = \NE(\mat{W}^*, \vect{b}^*)$.
\end{theorem}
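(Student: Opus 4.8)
The plan is to reduce exact PSNE recovery to a uniform, per-player closeness statement for the learned parameters and then turn that closeness into exact agreement between the sign patterns defining $\NE(\mat{W}^*,\vect{b}^*)$ and $\NE(\widehat{\mat{W}},\widehat{\vect{b}})$. First I would fix a player $i$ and invoke Lemma~\ref{lemma_l1norm_bound} to obtain $\NormI{\vhi-\vti}\leq \nicefrac{5\cmin}{\dmax}$; this in turn rests on the gradient bound of Lemma~\ref{lemma:grad_bound} and the finite-sample eigenvalue bounds of Lemma~\ref{lemma:eigvalue}. The two-sided bound \eqref{eq:reg_param_bounds} is exactly what makes the hypotheses consistent: the lower bound $\lambda\geq \nu+\sqrt{(2/m)\log(6n^2/\delta)}$ is the hypothesis of Lemma~\ref{lemma_l1norm_bound}, while the upper bound keeps $\lambda$ below the threshold required by Lemma~\ref{lemma_l2norm_bound} after substituting $\Abs{S_i}\leq k$ and the definition of $K$.

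Next I would take a union bound over the $n$ players, allocating $\nicefrac{\delta}{3}$ to each of the three bad events. The gradient event contributes the term $\sqrt{(2/m)\log(6n^2/\delta)}$, where the extra factor of $n$ over Lemma~\ref{lemma:grad_bound} accounts for the union over players; the two eigenvalue events contribute $nk\exp(-\nicefrac{m\cmin}{2k})$ and $nk\exp(-\nicefrac{m\tpmin}{4k})$, and demanding each be at most $\nicefrac{\delta}{3}$ yields precisely the three terms of \eqref{eq:sample_complexity}. On the resulting event, of probability at least $1-\delta$, we have $\NormI{\vhi-\vti}\leq \nicefrac{5\cmin}{\dmax}$ simultaneously for every $i$. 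Because the feature vector $\zi(\x)=(x_i\xin,\,x_i)$ has entries in $\{-1,+1\}$, we have $\NormInfty{\zi(\x)}=1$, so Hölder's inequality gives $\Abs{\vhi^{T}\zi(\x)-\vti^{T}\zi(\x)}\leq \NormI{\vhi-\vti}\leq \nicefrac{5\cmin}{\dmax}$ for every profile $\x$ and every player $i$; that is, each recovered payoff is within $\nicefrac{5\cmin}{\dmax}$ of the true payoff, uniformly.

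The forward inclusion $\NE^*\subseteq \NE(\widehat{\mat{W}},\widehat{\vect{b}})$ is then immediate: if $\x\in\NE^*$ then $\vti^{T}\zi(\x)\geq\rhomin$ for all $i$ by Assumption~\ref{ass:payoff}, hence $\vhi^{T}\zi(\x)\geq \rhomin-\nicefrac{5\cmin}{\dmax}\geq 0$, so $\x\in\NE(\widehat{\mat{W}},\widehat{\vect{b}})$. The reverse inclusion is the crux, and it is where the strict margin $\rhomin>\nicefrac{5\cmin}{\dmax}$ must be used. Given $\x\notin\NE^*$, I would exhibit a player whose recovered payoff is strictly negative. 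The structural tool is the flip identity: writing $\x^{(i)}$ for $\x$ with its $i$-th coordinate negated, $\zi(\x^{(i)})=-\zi(\x)$, so for a fixed context of player~$i$ its two actions carry exactly opposite payoffs $\pm(\wtin^{T}\xin-b_i)$ under both $\vti$ and $\vhi$; hence at most one of $\x,\x^{(i)}$ lies in $\NE^*$. If a deviating player at $\x$ has true payoff at most $-\rhomin$, then the $\nicefrac{5\cmin}{\dmax}$ perturbation cannot flip its sign, the recovered payoff stays negative, and $\x\notin\NE(\widehat{\mat{W}},\widehat{\vect{b}})$.

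I expect the main obstacle to be precisely certifying that every $\x\notin\NE^*$ possesses such a \emph{detectable} deviation, i.e. a player whose true payoff is at most $-\rhomin$ rather than merely negative. This is the step where Assumption~\ref{ass:payoff} is essential: the remark following it observes that zero- or near-zero-margin profiles would otherwise force near-exact parameter recovery, and the scaling argument that enforces $\rhomin>\nicefrac{5\cmin}{\dmax}$ is meant to push every genuine deviation past the $\nicefrac{5\cmin}{\dmax}$ resolution of the estimator. Reconciling the margin actually available at non-equilibrium profiles with this threshold is the delicate part; once it is established, the remaining work is the routine probabilistic accounting and the Hölder substitution described above.
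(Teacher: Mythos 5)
Your proposal reproduces the paper's proof essentially verbatim in every step the paper actually argues: Lemma~\ref{lemma_l1norm_bound} applied per player (resting on Lemmas~\ref{lemma:grad_bound}, \ref{lemma:eigvalue} and \ref{lemma_l2norm_bound}, with $\Abs{S_i}\leq k$ and the window \eqref{eq:reg_param_bounds}, whose nonemptiness $\sqrt{(2/m)\log(6n^2/\delta)}\leq K$ is what yields the first term of \eqref{eq:sample_complexity}); the union bound over the $n$ players with $\delta\mapsto\nicefrac{\delta}{3n}$, which produces the $\nicefrac{6n^2}{\delta}$ and $\nicefrac{3kn}{\delta}$ factors and the two eigenvalue terms of \eqref{eq:sample_complexity}; and the H\"older step $\Abs{(\vhi-\vti)^T\zi}\leq\NormI{\vhi-\vti}\NormInfty{\zi}\leq\nicefrac{5\cmin}{\dmax}$ followed by the margin argument $\vti^T\zi\geq\rhomin\geq\nicefrac{5\cmin}{\dmax}\implies\vhi^T\zi\geq 0$. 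This is exactly the paper's derivation, and it proves the inclusion $\NE^*\subseteq\NE(\widehat{\mat{W}},\widehat{\vect{b}})$.

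The genuine gap is the reverse inclusion, which you correctly isolate as the crux and then explicitly leave open (``once it is established\ldots''). You should know that the paper does not close it either: its proof stops at the forward inclusion and simply asserts equality of the PSNE sets. Moreover, the missing step is not routine. Assumption~\ref{ass:payoff} bounds payoffs only \emph{at equilibria}, and your flip identity cannot transfer that margin to non-equilibria: for $\x\notin\NE^*$ with deviating player $i$, the flipped profile $\x^{(i)}$ shares the context $\xin$ but need not lie in $\NE^*$ (negating $x_i$ can break other players' best-response conditions), so no profile in $\NE^*$ need exhibit the context $\xin$, and the magnitude $\Abs{\wtin^T\xin-b_i}$ can be strictly between $0$ and $\nicefrac{5\cmin}{\dmax}$. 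In that case the $\nicefrac{5\cmin}{\dmax}$ estimation error can flip the sign for every deviating player, and $\x$ becomes a spurious equilibrium of $(\widehat{\mat{W}},\widehat{\vect{b}})$ --- so the detectable-deviation certificate you ask for is genuinely unavailable from the stated hypotheses. The natural repair is to strengthen the assumption to a two-sided margin, e.g.\ $\Abs{\wtin^T\xin-b_i}\geq\rhomin$ for all $i$ and all contexts $\xin$ (or at least, for every $\x\notin\NE^*$, some deviating player with payoff at most $-\rhomin$); under such a condition your argument completes exactly as you sketched. In short: your accounting matches the paper wherever the paper argues, and the obstacle you flagged is a real gap --- one you share with, and in fact inherited from, the paper itself.
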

\begin{proof}
From Cauchy-Schwartz inequality and Lemma \ref{lemma_l1norm_bound} we have
\begin{align*}
\Abs{(\vhi - \vti)^T \zi} \leq \NormI{\vhi - \vti} \NormInfty{\zi} \leq \frac{5 \cmin}{\dmax}.
\end{align*}
Therefore, we have that
\begin{align*}
(\vti)^T \zi - \frac{5 \cmin}{\dmax}
        \leq \vhi^T \zi \leq (\vti)^T \zi + \frac{5 \cmin}{\dmax}.
\end{align*}
Now, if $\forall\; \vect{x} \in \NE^*$, $(\vti)^T \zi \geq \nicefrac{5 \cmin}{\dmax}$,
then $\vhi^T \zi \geq 0$.  Using an union bound argument over all players $i$, we can
show that the above holds with probability at least  
\begin{align}
1 - n(\delta + k \exp (\nicefrac{(-m \cmin)}{2k}) + 
	k\exp (\nicefrac{(-m \tpmin)}{4k}) \label{eq:thm_main_prob}
\end{align}
for all players. 
Therefore, we have that $\NE(\widehat{\mat{W}}, \widehat{\vect{b}}) = \NE^*$ 
with high probability. Finally, setting $\delta = \nicefrac{\delta'}{3n}$,
for some $\delta' \in [0, 1]$, and ensuring that the last two terms in \eqref{eq:thm_main_prob}
are at most $\nicefrac{\delta'}{3n}$ each, we prove our claim.
\end{proof}
To better understand the implications of the theorem above, 
we instantiate it for the global and local noise model. 
\begin{remark}[Sample complexity under global noise model]
Recall that $\dmax \leq \min(k, 2^n \pmax)$, and for the global noise model given by \eqref{eq:obs_model_global}
$\pmax = \nicefrac{q_g}{\Abs{\NE^*}}$. If $q_g$ is constant, then $\dmax = k$.
Then $K = \BigOm{\nicefrac{1}{k^2}}$, and the sample complexity of learning sparse
linear games grows as $\BigO{k^4 \log n}$.
However, if $q_g$ is small enough, i.e., $q_g = \BigO{\nicefrac{\Abs{\NE^*}}{2^n}}$, then 
$\dmax$ is no longer a function of $k$ and $K = \BigOm{\nicefrac{1}{k}}$. 
Hence, the sample complexity scales as $\BigO{k^2 \log n}$ for exact PSNE recovery.
\end{remark}

Next, we consider the implications of Theorem \ref{thm:psne} under the local noise model given by \eqref{eq:obs_model_local}.
we consider the regime where the parameter $q$ scales with the number of players $n$.
\begin{remark}[Sample complexity under local noise]
In the local noise model if the number of Nash-equilibria is constant, then
$\pmax = \BigO{exp(-n)}$, and once again $\dmax$ becomes independent of $k$, which results in a sample
complexity of $\BigO{k^2 \log n}$. 
\end{remark}

Also, observe the dependence of the sample complexity on the minimum noise level $\tpmin$. 
The number of samples required to recover the PSNE set increases as $\tpmin$ decreases.
From the aforementioned remarks we see that if the noise level is too low, i.e., $ \tpmin \rightarrow 0$, then
number of samples needed goes to infinity; This seems
counter-intuitive --- with reduced noise level, a learning problem should
become easier and not harder. To understand this seemingly counter-intuitive behavior,
first observe that the constant $\nicefrac{\dmax}{\cmin}$ can be thought of as the
``condition number'' of the loss function given by \eqref{eq:loss}. Then, the sample complexity
as given by Theorem \ref{thm:psne} can be written as $\BigO{\frac{k^2\dmax^2}{\cmin^2} \log n}$.
Hence, we have that as the noise level gets too low, the Hessian of the loss
becomes ill-conditioned, since the data set now comprises of many repetitions of the few
joint-actions that are in the PSNE set; thereby increasing the dependency ($\dmax$) between actions of players
in the sample data set.

\subsection{Necessary Conditions}
In this section we derive necessary conditions on
the number of samples required to learn graphical games.
Our approach for doing so is information-theoretic: we treat the 
inference procedure as a communication channel and then
use the Fano's inequality to lower bound the estimation error.
Such techniques have been widely used to obtain necessary conditions
for model selection in graphical models, see e.g. \cite{santhanam2012information,Wang2010},
sparsity recovery in linear regression \cite{Wainwright2009}, and many other problems.

Consider an ensemble $\fG_n$ of $n$-player games with the 
in-degree of each player being at most $k$. Nature picks a true game $\Game^* \in \fG$,
and then generates a data set $\Data$ of $m$ joint actions. A decoder is any function
$\psi: \X^m \rightarrow \fG_n$ that maps a data set $\Data$ to a game, $\psi(\Data)$, in $\fG_n$. The
minimum estimation error over all decoders $\psi$, for the ensemble $\fG_n$, is then given as follows:
\begin{align}
\perr \defeq \min_{\psi} \max_{\Game^* \in \fG_n} \Prob{\NE(\psi(\Data)) \neq \NE(\Game^*)},
\end{align}
where the probability is computed over the data distribution. Our objective here
is to compute the number of samples below which PSNE recovery fails with probability
greater than $\nicefrac{1}{2}$.
 
\begin{theorem}
\label{thm:fano}
The number of samples required to learn graphical games over $n$ players 
and in-degree of at most $k$, is $\BigOm{k \log n}$.
\end{theorem}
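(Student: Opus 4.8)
The plan is to cast exact PSNE recovery as a multiway hypothesis test and invoke Fano's inequality. First I would fix a carefully chosen finite sub-ensemble $\fG_n$ of games with in-degree at most $k$, of cardinality $T$, and let Nature draw $\Game^*$ uniformly from it. Since the decoder $\psi$ is judged only on whether $\NE(\psi(\Data)) = \NE(\Game^*)$, I would design the ensemble so that its $T$ members have \emph{pairwise distinct} PSNE sets. Then any decoder that recovers the PSNE set also identifies the index of the true game, so a misidentification of the index implies a recovery error, and the standard Fano reduction gives
\begin{align*}
\perr \geq 1 - \frac{I(\Game^*; \Data) + \log 2}{\log T},
\end{align*}
where $I(\Game^*;\Data)$ is the mutual information between the game index and the observed data.

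The second step is to control $\log T$ and $I(\Game^*;\Data)$ in opposite directions. For the numerator I would use the convexity bound $I(\Game^*;\Data) \leq \frac{1}{T^2}\sum_{j,k}\KL{\Pf_j^m}{\Pf_k^m}$, where $\Pf_j$ is the per-sample distribution induced by the $j$-th game; the tensorization $\KL{\Pf_j^m}{\Pf_k^m} = m\,\KL{\Pf_j}{\Pf_k}$ follows from \iid sampling, so $I(\Game^*;\Data) \leq \frac{m}{T^2}\sum_{j,k}\KL{\Pf_j}{\Pf_k}$. It therefore suffices to exhibit an ensemble for which every pairwise divergence $\KL{\Pf_j}{\Pf_k}$ is bounded by a universal constant $C$, while $\log T = \BigOm{k \log n}$. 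Substituting into the Fano bound yields $\perr \geq 1 - \frac{mC + \log 2}{\log T}$, so that $\perr > \nicefrac{1}{2}$ whenever $m \leq \frac{\log T - 2\log 2}{2C} = \BigOm{k \log n}$, which is exactly the claimed necessary condition.

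The combinatorial heart of the argument is constructing such an ensemble. I would keep the payoffs of all but one designated player fixed and vary only the in-neighborhood $S$, of size $k$, of that player (together with a matching bias), so that each of the $\binom{n-1}{k}$ admissible neighborhoods induces a different best-response region and hence a different PSNE set; this gives $\log T = \log\binom{n-1}{k} = \BigOm{k \log n}$. Because all games in the ensemble agree outside the designated player, two induced distributions $\Pf_j$ and $\Pf_k$ differ only on the joint actions whose PSNE membership is flipped by the change of neighborhood, and under the mild regularity of Assumption~\ref{ass:distribution} (which both the global and local noise models satisfy) this symmetric difference contributes only a bounded amount of divergence per sample, provided the signal level $\pmax$ and the sizes $\Abs{\NE^*}$ stay comparable across the ensemble.

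Verifying this last point is the main obstacle: I must simultaneously guarantee that the PSNE sets are genuinely distinct --- a game-theoretic statement about linear influence games, not merely a parametric one --- and that their pairwise overlap remains large enough to keep $\KL{\Pf_j}{\Pf_k}$ bounded by a constant $C$. Balancing these competing requirements, namely many well-separated hypotheses that are nonetheless statistically nearly indistinguishable, is what pins down the $\BigOm{k\log n}$ rate and is where the construction and the divergence computation must be tuned most delicately.
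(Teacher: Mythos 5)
Your high-level framework is exactly the paper's: restrict to a sub-ensemble of games with pairwise distinct PSNE sets, reduce PSNE recovery to index identification, apply Fano's inequality, and control the mutual information via the average pairwise KL divergence with \iid tensorization --- this matches the paper step for step, including the $\log T = \BigOm{k \log n}$ target and the constant-per-sample KL target. But the entire mathematical content of the theorem lies in the two steps you explicitly defer, and your sketch of them is under-specified in ways that matter. First, in your construction you never specify the payoffs of the $n-1$ non-designated players, and distinctness of the PSNE sets across the $\binom{n-1}{k}$ neighborhoods is a property of the whole fixed-point structure, not of one player's parameters alone: if the fixed part of the game constrains $\x_{-i}$ too tightly, different neighborhoods can induce identical PSNE sets, so distinctness must be proved, not assumed. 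Second, and more quantitatively, you never choose the signal level, and that choice is where the bound is actually won or lost: under the global noise model, two games with nearly disjoint \emph{small} PSNE sets have per-sample KL on the order of $\left(q - \frac{1-q}{2^n-1}\right)\log\bigl(q(2^n-1)/(1-q)\bigr) \approx qn$, which is not $\BigO{1}$ for constant $q$. Your proviso that the signal level and $\Abs{\NE^*}$ ``stay comparable'' gestures at this tension but does not resolve it.

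The paper closes both gaps concretely. Its ensemble $\ftG$ consists of complete bipartite games from a set of $k$ ``influential'' players (edge weights $-1$, influential biases $+1$, remaining biases $0$) to the other $n-k$ players; one checks directly that each such game has a \emph{singleton} PSNE set ($x_i = -1$ exactly on the influential players), with minimum payoff exactly $1$, so distinctness and the strictly-positive-payoff requirement are immediate, and $H(\Game) = \log\binom{n}{k} \geq k(\log n - \log k)$. The KL obstruction is then defused by scaling the signal level with $n$: the data are drawn from the global noise model with $q = \nicefrac{1}{n}$, and an explicit computation shows each pairwise KL over the whole data set is at most $m \log 2$. Plugging into Fano with $\perr = \nicefrac{1}{2}$ gives $m = \BigOm{k \log n}$. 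So the skeleton you propose is the right one, but without an explicit ensemble with verified singleton (or otherwise provably distinct) PSNE sets and without the $q = \nicefrac{1}{n}$ scaling, the argument does not go through; the part you label ``the main obstacle'' is precisely the proof.
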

\begin{remark}
From the above theorem and from Theorem \ref{thm:psne} we observe that the method of $l_1$-regularized 
logistic regression for learning graphical games, 
operates close to the fundamental limit of $\BigOm{k \log n}$.
\end{remark}%
Results from simulation experiments for both global and local noise model can be
found in Appendix \ref{sec:experiments}.
\paragraph{Concluding Remarks.}
\label{sec:conclusion}
An interesting direction for future work
would be to consider structured actions --- for instance
permutations, directed spanning trees, directed acyclic graphs among others --- thereby extending
the formalism of linear influence games to the structured prediction setting. 
Other ideas that might be worth pursuing are: considering mixed strategies,
correlated equilibria and epsilon Nash equilibria, and incorporating latent or unobserved
actions and variables in the model.

\begin{small}
\bibliographystyle{IEEEtran}
\bibliography{paper}
\end{small}
\clearpage
\begin{appendices}
\section{Detailed Proofs}
\label{app:detailed_proofs}
\begin{proof}[Proof of Proposition \ref{prop:number_of_equilibria}]
Let $\Abs{\NE^*} > 2^{n - 1}$. Then by the pigeon hole principle there are
at least two joint actions $\x$ and $\x'$ in $\NE^*$ such that $\x = -\x'$.
Since the payoff is strictly positive, it follows that the bias $b_i$ for 
each player must be $0$. If the bias for all players is $0$, then for each
$\x \in \NE^*$, $-\x \in \NE^*$. Therefore, $\Abs{\NE^*} = 2^n$. Since we
have assumed that the game is non-trivial, we get a contradiction.
\end{proof}
\begin{proof}[Proof of Lemma \ref{lemma_l1norm_bound}]
Define $\Delta \defeq \vh - \vt$. Also for any vector $\vect{y}$ let
the notation $\vect{y}_{\overline{S}}$ denote the vector $\vect{y}$ with
the entries not in the support, $S$, set to zero, i.e.
\begin{align*}
\left[\vect{y}_{\overline{S}} \right]_i &= \left\{\begin{array}{lr}
y_i & \text{if $i \in S$},\\
0 & \text{otherwise}.
\end{array}\right.
\end{align*}
Similarly, let the notation $\vect{y}_{\overline{\Sc}}$ denote the vector $\vect{y}$
with the entries not in $\Sc$ set to zero, where $\Sc$ is the complement of $S$.
Having introduced our notation and since, $S$ is the support of the true vector $\vt$,
we have by definition that $\vt = \vtbs$. We then have, using the reverse triangle inequality,
\begin{align}
\NormI{\vh} &= \NormI{\vt + \Delta} = \NormI{\vtbs + \dbs + \dbsc} \notag \\
&= \NormI{\vtbs - (- \dbs)} + \NormI{\dbsc} \notag \\
&\geq \NormI{\vt} - \NormI{\dbs} + \NormI{\dbsc}. \label{eq:onenorm_bound1}
\end{align}
Also, from the optimality of $\vh$ for the $\ell_1$-regularized problem we have that
\begin{align}
\loss(\vt) + \lambda \NormI{\vt} &\geq \loss(\vh) + \lambda \NormI{\vh} \notag \\
\implies \lambda(\NormI{\vt} - \NormI{\vh}) &\geq \loss(\vh) - \loss(\vt). \label{eq:onenorm_bound2}
\end{align}
Next, from convexity of $\loss(.)$ and using the Cauchy-Schwartz inequality
we have that
\begin{align}
\loss(\vh) - \loss(\vt) &\geq \Grad \loss(\vt)^T(\vh - \vt) \notag \\
&\geq - \NormInfty{\Grad \loss(\vt)} \NormI{\Delta} \notag \\
&\geq - \frac{\lambda}{2} \NormI{\Delta}, \label{eq:onenorm_bound3}
\end{align}
in the last line we used the fact that $\lambda \geq \NormInfty{\Grad \loss(\vt)}$.
Thus, we have from \eqref{eq:onenorm_bound1},
\eqref{eq:onenorm_bound2} and \eqref{eq:onenorm_bound3} that 
\begin{align}
& \frac{1}{2} \Norm{\Delta}_1 \geq \NormI{\vh} - \NormI{\vt} \notag \\
\implies &\frac{1}{2} \Norm{\Delta}_1 \geq \NormI{\dbsc} - \NormI{\dbs} \notag \\
\implies &\frac{1}{2} \NormI{\dbsc} + \frac{1}{2} \NormI{\dbs} \geq \NormI{\dbsc} - \NormI{\dbs} \notag \\
\implies &3 \NormI{\dbs} \geq \NormI{\dbsc}. \label{eq:onenorm_bound4}
\end{align}
Finally, from \eqref{eq:onenorm_bound4} and Lemma \ref{lemma_l2norm_bound} we
have that
\begin{align*}
\NormI{\Delta} &= \NormI{\dbs} + \NormI{\dbsc} \\
&\leq 4 \NormI{\dbs} \leq 4 \sqrt{\Abs{S}} \NormII{\dbs} \\
&\leq \frac{5\cmin}{\dmax}.
\end{align*}
\end{proof}
\begin{proof}[Proof of Theorem \ref{thm:fano}]
First, we construct a restricted ensemble of games $\ftG \subset \fG$ as follows. Each game $\Game \in \ftG$
contains $k$, randomly chosen, \emph{influential} players. The game graph for $\Game$
is then chosen to be a complete directed bipartite graph from the set of $k$ influential
players to the set of $n-k$ non-influential players. The edge weights are all set to 
$-1$, the bias for the $k$ influential players is set to $+1$, while the bias for the
remaining $n-k$ players is set to $0$. Then it is clear that \emph{each game in $\ftG$ induces
a distinct size-one PSNE set}. Specifically, for a game $\Game \in \ftG$,
a joint action $\x \in \NE(\Game)$ is such that $x_i = -1$ if player $i$ is influential
in $\Game$, otherwise $x_i = +1$. Also, note that the minimum payoff in the PSNE set
of each game in $\ftG$ is strictly positive, and is precisely $1$. Finally, we assume
that the data set is drawn according to the global noise model $\eqref{eq:obs_model_global}$,
with $q = \nicefrac{1}{n}$.
Now let $\Game \in \ftG$ be a uniformly-distributed 
random variable corresponding to the game that was picked by nature.
From the Fano's inequality, we have that:
\begin{align}
\perr \geq 1 - \frac{\MI(\Data ; \Game) + \log 2}{H(\Game)}, \label{eq:fano}
\end{align}
where $\MI(.)$ denotes mutual information and $H(.)$ denotes entropy.
Since, $\Game$ is uniformly distributed, we have that 
$H(\Game) = \log \big |\ftG \big| = \log {n \choose k} \geq k (\log n - \log k)$.
Let $\Pf_{\Data | \Game = \Game_1}$ be the conditional distribution of the data set given a game $\Game_1 \in \ftG$.
We bound the mutual information $\MI(\Data; \Game)$ by a pairwise KL-based bound from 
\cite{Yu97} as follows:
\begin{align}
\MI(\Data ; \Game) \leq 
	\frac{1}{ \big | \ftG \big|} \sum_{\Game_1 \in \ftG} \sum_{\Game_2 \in \ftG} 
		\KL{\Pf_{\Data | \Game = \Game_1}}{\Pf_{\Data | \Game = \Game_2}}. \label{eq:mi}
\end{align}
Now from the fact that data are sampled \iid, we get:
\begin{align}
&\textstyle {\KL{\Pf_{\Data | \Game = \Game_1}}{\Pf_{\Data | \Game = \Game_2}}}  \notag \\
	&\quad \textstyle{= m \sum_{\x \in \X} \Pf_{\Data | \Game = \Game_1}(\x) 
		\log \frac{\Pf_{\Data | \Game = \Game_1}(\x)}{\Pf_{\Data | \Game = \Game_2}(\x)}} \notag \\
	&\quad = \textstyle{m \left\{ q \log \frac{q(2^n - 1)}{1 - q} + \frac{1 - q}{2^n - 1} \log \frac{1 - q}{q(2^n - 1)} \right\}} \notag \\
	&\quad = \textstyle{\frac{m (2^n q - 1)}{2^n-1} \left(\log{q} - \log{\left(\frac{1-q}{2^n-1}\right)} \right)} \notag \\
	&\quad \leq m \log 2, \label{eq:kl}
\end{align}
where the last line comes from the fact that $q = \nicefrac{1}{n}$.
Putting together \eqref{eq:fano}, \eqref{eq:mi} and \eqref{eq:kl}, and setting $\perr = \nicefrac{1}{2}$,
we get
\begin{align*}
m \leq \frac{k \log n - k \log k - 2 \log 2}{2 \log 2}.
\end{align*}
By observing that learning the ensemble $\fG$ is at least as hard as learning a subset of the
ensemble $\ftG$, we prove our main claim.
\end{proof}

\section{Experiments}
\label{sec:experiments}
In order to verify that our results and assumptions indeed hold in practice,
we performed various simulation experiments. We generated random LIGs for $n$ players
and exactly $k$ neighbors by first creating a matrix $\mat{W}$ of all zeros and then
setting $k$ off-diagonal entries of each row, chosen uniformly at random, to $-1$.
We set the bias for all players to $0$.
We found that any odd value of $k$ produces games with strictly
positive payoff in the PSNE set. Therefore, for each value of $k$ in $\{1, 3, 5\}$,
and $n$ in $\{10, 12, 15, 20\}$, we generated $40$ random LIGs. For experiments involving
the local noise model, we only used $n \in \{10, 12, 15\}$. The parameter 
$\delta$ was set to the constant value of $0.01$. For the global noise model,
the parameters $q_g$ was set to $0.01$, while for the local noise model we used
$q_1 = \ldots = q_n = 0.6$. The regularization parameter $\lambda$ was set according to Theorem \ref{thm:psne}
as some constant multiple of $\sqrt{(\nicefrac{2}{m}) \log (\nicefrac{2n}{\delta})}$.
Figure \ref{fig:simulation} shows the probability of successful recovery of the PSNE,
for various combinations of $(n, k)$, where the probability was computed
as the fraction of the $40$ randomly sampled LIGs for which the learned PSNE set matched the
true PSNE set exactly.
For each experiment, the number of samples was computed as:
$\lfloor(C) (10^{c}) (k^2 \log(\nicefrac{6n^2}{\delta}))\rfloor$, where $c$ is the control parameter 
and the constant $C$ is $10000$ for $k = 1$ and $1000$ for $k = 3$ and $5$. Thus,
from Figure \ref{fig:simulation} we see that, the sample complexity of $\BigO{k^2 \log n}$ as given
by Theorem \ref{thm:psne} indeed holds in practice, i.e., there exists constants
$c$ and $c'$ such that if the number of samples is less than $c k^2 \log n$, we
fail to recover the PSNE set exactly with high probability, while if the number of 
samples is greater than $c' k^2 \log n$ then we are able to recover the PSNE set
exactly, with high probability. Further, the scaling remains consistent as the
number of players $n$ is changed from $10$ to $20$.
\begin{figure*}[htbp]
\centering
\includegraphics[width=0.7\linewidth]{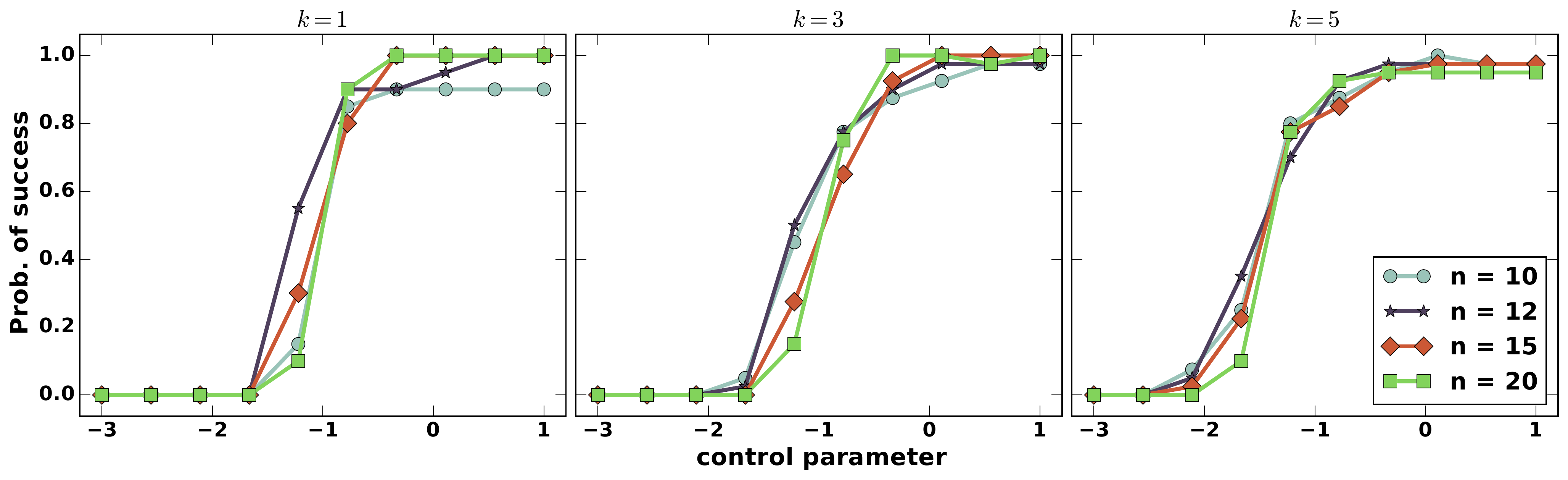}
\includegraphics[width=0.7\linewidth]{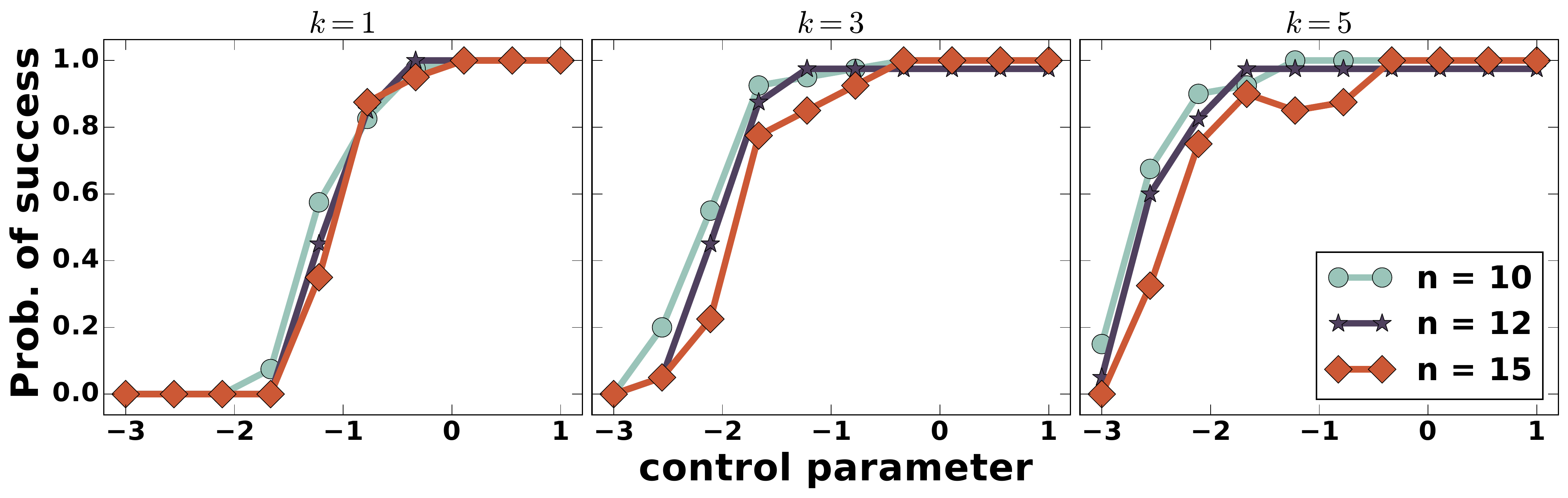}
\caption{The probability of exact recovery of the PSNE set computed across $40$ randomly sampled LIGs,
using the global noise model (TOP) and local noise model (BOTTOM),
as the number of samples is scaled as $\lfloor (C) (10^{c}) (k^2 \log(\nicefrac{6n^2}{\delta}))\rfloor$,
where $c$ is the control parameter and the constant $C$ is $10000$ for the $k = 1$ case 
and $1000$ for the remaining two case. For the global noise model we set $q_g = 0.001$, while
for the local noise model we used $q_1 = \ldots = q_n = 0.6$.%
\label{fig:simulation}}
\end{figure*}%

\end{appendices}

\end{document}